\documentclass[twoside]{article}
\usepackage[accepted]{aistats2017}

\usepackage[utf8]{inputenc} 
\usepackage[T1]{fontenc}    
\usepackage{url}            
\usepackage{booktabs}       
\usepackage{amsfonts}       
\usepackage{nicefrac}       
\usepackage[colorlinks=true, allcolors=blue]{hyperref}
\usepackage{amsmath}
\usepackage{amssymb}
\usepackage{amsthm}
\usepackage{graphicx}
\usepackage{subfig}
\usepackage{caption}
\usepackage{epstopdf}
\usepackage{algorithm}
\usepackage[noend]{algorithmic}
\usepackage{natbib}

\graphicspath{{figures/}}

\newcommand{\expect}[1]{\mathbb{E}\left[#1\right]}
\newcommand{\var}[1]{\text{Var}\left(#1\right)}

\newcommand{\pr}[1]{\Pr\left[#1\right]}
\newcommand{\x}{{\bf x}}

\newcommand{\simiid}{\overset{i.i.d.}{\sim}}

\newcommand{\M}{{\mathcal M}}
\DeclareMathOperator*{\argmin}{\arg\!\min}

\newtheorem{lemma}{Lemma}
\newtheorem{theorem}{Theorem}

\theoremstyle{definition}

\theoremstyle{remark}
\newtheorem{remark}{Remark}

\begin{document}

\runningauthor{Amit Moscovich, Ariel Jaffe, Boaz Nadler}
%

%

\twocolumn[
        \aistatstitle{Minimax-optimal semi-supervised regression on unknown manifolds}
        \aistatsauthor{Amit Moscovich \And  Ariel Jaffe \And  Boaz Nadler}
        \aistatsaddress{Weizmann Institute of Science \And Weizmann Institute of Science \And Weizmann Institute of Science}
]

\begin{abstract}
    We consider semi-supervised regression when the predictor variables are drawn from an unknown manifold.
    A simple two step approach to this problem is to:
    (i) estimate the manifold geodesic distance between any pair of points using both the labeled and unlabeled instances; 
    and (ii) apply a $k$ nearest neighbor regressor based on these distance estimates.
    We prove that given sufficiently many unlabeled points, this simple method of \emph{geodesic kNN regression}
    achieves the optimal \emph{finite-sample} minimax bound on the
    mean squared error, as if the manifold were known.
    Furthermore, we show how this approach can be efficiently implemented,
    requiring only $O(k N \log N)$ operations to estimate 
    the regression function at all \(N \) labeled and unlabeled points.
    We illustrate this approach on two datasets with a manifold structure:
    indoor localization using WiFi fingerprints and
    facial pose estimation.
    In both cases, geodesic kNN is more accurate and much faster than the
    popular Laplacian eigenvector regressor.
\end{abstract}

\section{Introduction}
In recent years, many semi-supervised regression and classification methods have been proposed,
see the surveys by \cite{Chapelle2006, zhu2009introduction, SubramanyaTalukdar2014}.
These methods demonstrated empirical success on some data sets,
whereas on others the unlabeled data did not appear to help.
This raised two key questions of continued interest:
(i) under which conditions can the potentially huge amount of unlabeled data help the learning process?
and (ii) can we design statistically sound and computationally efficient methods that benefit from the unlabeled data?

The {\em cluster} assumption and the {\em manifold} assumption are two common
models for studying the above questions regarding semi-supervised learning. 
Under the cluster assumption, instances with the same label concentrate in well-defined clusters separated by low density regions \citep{Chapelle2005,Rigollet2007,Singh2009}.
Under the manifold assumption the data points reside in one or several low-dimensional manifolds, with nearby instances on the manifold having similar response values.

\cite{BickelLi2007} as well as \cite{LaffertyWasserman2007} studied semi-supervised learning under the manifold assumption.
They showed that without knowing the manifold, standard multivariate polynomial regression in the ambient space,
using only the labeled data, achieves the asymptotic minimax rate for Sobolev functions.
According to these results, it seems there is little benefit to the availability of additional unlabeled data. However, these results require that the number of labeled samples tends to infinity. Intuitively, in this limit, the geometry of the data manifold
and the sampling density can be accurately estimated from the labeled data alone.
Thus the benefits of a potentially huge number of unlabeled points when there is little labeled data remained unclear.

One of the goals of this work is to clarify this benefit of unlabeled data for rather general manifolds, via a finite sample analysis, whereby the number of labeled samples is fixed. In this context, \cite{Niyogi2013} showed that unlabeled data can indeed help, by presenting a specially constructed manifold, for which supervised learning is provably more difficult than semi-supervised learning.  
 \cite{GoldbergZhuSinghXuNowak2009}
considered this question under both the manifold and multi-manifold cases.
In particular, in their Section 2.1, they conjectured that semi-supervised learning of a H\"older function on an unknown manifold with intrinsic dimension $d$
can achieve the finite-sample minimax bound for nonparametric regression in $\mathbb{R}^d$.

In this paper we prove that when the regressed function is Lipschitz, a simple semi-supervised regression method
based on geodesic nearest neighbor averaging
achieves the finite-sample minimax bound
when the amount of unlabeled points is sufficiently large.
This settles the conjecture of \cite{GoldbergZhuSinghXuNowak2009}
for the Lipschitz case.

The regression method we consider, denoted \emph{geodesic kNN regression}, consists of two steps:
(i) estimate the manifold geodesic distances
by shortest-path distances in a graph constructed from both the labeled and unlabeled points; and (ii) estimate the response at any point by averaging its $k$ geodesic nearest labeled neighbors.
Section \ref{sec:framework} describes the graph construction and the corresponding nonparametric statistical estimation method.
Our main result, detailed in Section \ref{sec:stat_analysis}, is a proof that for a Lipschitz function on a manifold,
if enough unlabeled samples are available, 
then with high probability this method achieves the finite-sample minimax bound.
In Section \ref{sec:fastgknn} we discuss the computational aspects of this approach,
which is very fast compared to spectral-based semi-supervised methods.
Finally, in Section \ref{sec:application} we apply our method to two problems with a low dimensional manifold structure, indoor localization using WiFi fingerprints
and facial pose estimation.
On both problems geodesic kNN exhibits a marked improvement
compared to classical kNN, which does not utilize the unlabeled data,
and also compared to the popular semi-supervised regression method of \cite{BelkinNiyogi2004}.

\section{Semi-supervised learning with geodesic distances} \label{sec:framework}

We consider the following framework for semi-supervised learning.
Given $n$ labeled instances $\mathcal{L} = \{(\x_i, y_i)\}_{i=1}^n $
and $m$ unlabeled instances $\mathcal U = \{\x_j\}_{j=1}^m$ from an instance space $\mathcal{X}$
equipped with a distance function $d(\x,\x')$:
\begin{enumerate}
    \item Construct an undirected (sparse)\ graph $G$ whose vertices
    are all the labeled and unlabeled points.
    Pairs of close points $\x, \x'$ are then connected by an edge with weight $w(\x, \x') = d(\x, \x')$.
    
    \item Compute the shortest-path graph distance $d_G(\x_i, \x_j)$ for all $\x_i \in \mathcal L$ and
    $\x_j \in \mathcal L \cup\mathcal U$.
    \item Apply standard metric-based supervised learning methods,
    such as kNN or Nadaraya-Watson, using the computed graph distances $d_G$.
\end{enumerate}

This framework generalizes the work of  \citet{BijralRatliffSrebro2011},
which assumed that the samples are vectors in $\mathbb{R}^D$
and  the distance function is 
\(
    \| \x_i - \x_j\|_p^q.
\)
The use of geodesic nearest neighbors for classification was also considered by \cite{BelkinNiyogi2004}.
Specific edge selection rules include the distance-cutoff rule,
whereby two points are connected by an edge if their distance is below a threshold,  
and the symmetric kNN rule, where every point is connected by an edge to its $k$
nearest neighbors and vice versa.
\citep{AlamgirVonluxburg2012, TingHuangJordan2010}

The elegance of this framework is that it \textit{decouples} the unsupervised and supervised
parts of the learning process.
It represents the geometry of the samples by a single metric $d_G$,
thus enabling the application of any supervised learning algorithm based on a metric.
For classification, a natural choice is the $k$ nearest neighbors algorithm.
For regression, one may similarly employ a $k$ nearest neighbor regressor.
For any $\x_i\in\mathcal L\cup\mathcal U$, let kNN$(\x_i) \subseteq \mathcal{L}$ denote the set of $k$ (or less)\ nearest \emph{labeled} neighbors to $\x_i$, as determined by the graph distance $d_G$.
The \emph{geodesic kNN regressor} at \(\x_i\) is 
\begin{equation} \label{eq:gknn_regressor}
    \hat{f}(\x_{i}) := \frac{1}{|\text{kNN}(\x_i)|}\sum_{(\x_j, y_j) \in \text{kNN}(\x_i)} y_j.
\end{equation}

We now extend the definition of $\hat{f}(\x)$ to the inductive setting.
Assume we have already computed the regression estimates $\hat{f}(\x_i)$ of Eq. \eqref{eq:gknn_regressor}
for all points in $\mathcal{L} \cup \mathcal{U}$. For a new instance $\x \notin \mathcal{L} \cup
\mathcal{U}$,
we first find its \emph{Euclidean} nearest neighbor $\x^*$ from $\mathcal{L} \cup\mathcal{U}$. This can be done in sublinear time either using data structures for spatial queries \citep{Omohundro1989,Bentley1975}
or by employing approximate nearest neighbor methods \citep{AndoniIndyk2006}. 
Then the geodesic kNN regression estimate at $\x$ is
\begin{align}
    \hat{f}(\x) := \hat{f}(\x^*) = \hat{f} \left( \argmin_{\x' \in \mathcal{L} \cup \mathcal{U}} \| \x - \x'\| \right). 
                \label{eq:inductive}
\end{align}

\section{Statistical analysis under the manifold assumption}
\label{sec:stat_analysis}
We now analyze the statistical properties of the geodesic kNN\ regressor \(\hat f\) of Eq. (\ref{eq:inductive}), under the manifold assumption.
We consider a standard nonparametric regression model, $Y = f(X) + \mathcal{N}(0,\sigma^2)$ where
$X \in \mathbb{R}^D$ is drawn according to
a measure $\mu$ on $\mathcal M$.
We prove that if $f$ is Lipschitz with respect to the manifold distance and if enough unlabeled points are available then $\hat{f}$ obtains the minimax bound on the mean squared error.

To this end, we first review some classical results in nonparametric estimation.
Let $\hat f:\mathbb{R}^D \to \mathbb{R}$ be an estimator of a function $f$, based on $n$ noisy samples.
Let 
\(
    \text{MSE}(\hat{f}, \x) := \expect{ ( \hat{f}(\x) - f(\x) )^2}$ be its  mean squared error at a point $\x \in \mathbb{R}^D$,
where the expectation is over the random draw of data points.
It can be shown that for \emph{any} estimator $\hat{f}$ and any point $\x \in \mathbb{R}^D$,
there is some Lipschitz function $f$ such that
\(
    \text{MSE}(\hat{f}, \x) \ge c n^{-\frac{2}{2+D}}
\)
for some constant $c > 0$ that depends only on the Lipschitz constant and the noise level.
The term $n^{-\frac{2}{2+D}}$ is thus termed a \emph{finite-sample minimax lower bound}
on the MSE at a point.
Several results of this type were derived under various measures of risk and
classes of functions \citep{Tsybakov2009,Gyorfi2002}.

Standard nonparametric methods such as Nadaraya-Watson or kNN regression
have an upper bound on their MSE that is also of the form $c' n^{-\frac{2}{2+D}}$.
Hence these methods are termed \emph{minimax optimal}
for estimating a Lipschitz function.

In Theorem \ref{thm:minimax_optimality} below we prove that given a sufficient number of \emph{unlabeled} points,
the MSE of the geodesic kNN regressor is upper-bounded by $cn^{-\frac{2}{2+d}}$
where $c$ is some constant and $d$ is the \textit{intrinsic} dimension of the manifold.
Hence the geodesic kNN regressor is minimax-optimal and adaptive to the geometry of the unknown manifold.

\subsection{Notation and prerequisites}

Our main result relies on the analysis of \cite{TenenbaumDesilvaLangford2000} regarding the approximation of manifold distances by graph distances.
Before stating our result, we thus first introduce some notation, our assumptions and a description of the key results of \cite{TenenbaumDesilvaLangford2000} that we shall use. 

For a general background on smooth manifolds, see for example the book by \cite{Lee2012}.
We assume that the data manifold $\M \subseteq \mathbb{R}^D$
is a compact smooth manifold of known intrinsic dimension $d$,
possibly with boundaries and corners.
We further assume that $\M$ is geodesically convex,
i.e. that every two points in $\M$ are connected by a geodesic curve.
We denote by $d_{\mathcal M}(\x, \x')$ the length of the shortest path between two
points in $\mathcal M$, the diameter of $\M$ by
\(
    \text{diam}(\M) := \sup_{\x, \x'} d_\M(\x, \x')
\)
and the manifold-ball of points around $\x$ by $B_\x(r) := \{\x' \in \M : d_\M(\x, \x') < r \}$.
We denote the volume of $\M$ by $V$ and the minimum volume of a manifold ball of radius $r$
by $V_{\min}(r) := \min_{\x \in \M} Vol(B_\x(r))$.
We denote by $r_0$ the minimal radius of curvature of $\M$
and by $s_0$ its minimal branch separation (see the supplementary of \cite{TenenbaumDesilvaLangford2000}
for precise definitions).
We assume that the data points are sampled i.i.d. from some measure $\mu$ on $\M$ with associated density function $\mu(\x)$.
For every point $\x \in \M$ and radius $r \le R$ we assume that $\mu(B_{\x}(r)) \ge Q r^d$
where $R,Q > 0$.
This condition means that the measure of small balls grows with the radius as is typical for dimension $d$.
In particular, it guarantees that the minimum density $\mu_{\min}:=\min_{\x\in\mathcal M}\mu(\x)> 0$.
Finally, we assume that $f:\M \to \mathbb{R}$ is a bounded $L$-Lipschitz function on $\M$, 
\begin{align} \label{eq:lipschitz}
    \forall \x, \x' \in \mathcal{M}:|f(\x)-f(\x')| \le L d_{\mathcal{M}}(\x, \x').
\end{align}

We now reproduce the statement of Theorem B.
\paragraph{Theorem B. \citep{TenenbaumDesilvaLangford2000}}
\textit{Let $\M \subseteq \mathbb{R}^D$ be a compact smooth and geodesically convex manifold of intrinsic dimension $d$.
    Let $\delta, \epsilon, r > 0$ be constants.
    Let $X_1, \ldots, X_N \simiid \mu$ be a sample of points on $\M$
    and suppose we use these points to construct a graph $G$ using the distance-cutoff
    rule with threshold $r$ where $r < \min\{s_0, (2/\pi) r_0 \sqrt{24 \delta}\}$.
}

\textit{\noindent Denote by $A$ the event that the following inequalities
\begin{align} \label{eq:isomap_distance_bounds}                    
    1-\delta \le d_G(X_i, X_j) / d_{\mathcal M}(X_i, X_j) \le 1+\delta.
\end{align}
\textit{hold for all pairs $X_i, X_j$, where $1 \le i,j \le N$}. Then}
\begin{align} \label{eq:pr_A}
    \pr{
        A
        \Big|
        N > \frac{
                \log \left( V / \epsilon V_{\min} \left( \tfrac{\delta r}{16} \right) \right) 
            }{
                \mu_{\min} V_{\min} \left( \tfrac{\delta r}{8} \right)
            }
    }
    \ge
    1 - \epsilon.
\end{align}

\begin{remark}
    By Theorem C in \citep{TenenbaumDesilvaLangford2000}, a similar result holds for the symmetric kNN rule.
\end{remark}
\begin{remark}
    In the typical case where $V_{\min}(r) \sim r^{-d}$,
    if we fix $\epsilon, \delta$ we must have $N \gtrsim \frac{1}{\mu_{\min}} (8 / \delta r)^d$.
    In other words, the required number of samples for Eq. \eqref{eq:isomap_distance_bounds}
    to hold is exponential in the intrinsic dimension $d$.
\end{remark}
\begin{remark}
    If we fix $N, \delta, r$ and invert Eq. \eqref{eq:pr_A}, we conclude that $\pr{A^c}$
    decays exponentially with $N$,
    \begin{align} \label{eq:epsilon_bound}
        \pr{A^c} < \epsilon = c_a e^{-c_b N}
    \end{align}
    where $c_a = V / V_{\min}(\frac{\delta r}{16})$ and $c_b = V_{\min}(\frac{\delta r}{8}) \cdot \mu_{\min}$.
    A similar bound holds for the symmetric kNN graph.
\end{remark}
\begin{remark}
    Theorems B and C consider points drawn from a Poisson point process.
    However, they hold also in the case of an i.i.d. draw of $N$ points.
    See page 11 of the supplement of \cite{TenenbaumDesilvaLangford2000}.
\end{remark}

\subsection{Main result}    
We are now ready to state our main theorem.
It bounds the expected MSE of the geodesic kNN regressor $\hat{f}(\x)$ at a fixed point $\x \in \M$,
where the expectation is over the draw of $n$ labeled and $m$ unlabeled points.

\begin{theorem} \label{thm:minimax_optimality}
    Consider a fixed point $\x \in \M$. Suppose the manifold $\M$, the measure $\mu$
    and the regression function $f$ satisfy all the assumptions stated above.
    Then, the geodesic kNN regressor of Eq. \eqref{eq:inductive}
    computed using  the distance-cutoff rule with r as in Theorem B,
    or a symmetric kNN rule with a suitable k, satisfies
    \begin{align} \label{eq:main_theorem}
        &\expect{( \hat{f} \left( \x \right) - f(\x) )^2 }
        \le
        c n^{-\frac{2}{2+d}}
        +
        c' e^{-c'' \cdot(n+m)} f_D^2.
    \end{align}
    where $f_D := f_{\max} - f_{\min}$.
    The coefficients $c,c',c''$ are independent of the sample size. They depend only on the Lipschitz constant of $f$, the noise level $\sigma$,
    properties of $\M$ and $\mu$ and on the parameters $\epsilon, \delta$ in Theorem B.
\end{theorem}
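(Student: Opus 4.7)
The plan is to split the MSE according to the event $A$ of Theorem B, that all pairwise graph distances in the graph on $\mathcal{L} \cup \mathcal{U}$ satisfy $(1-\delta) d_\M(\cdot,\cdot) \le d_G(\cdot,\cdot) \le (1+\delta) d_\M(\cdot,\cdot)$. Writing
\begin{align*}
\expect{(\hat f(\x) - f(\x))^2} &= \expect{(\hat f(\x) - f(\x))^2 \mathbb{1}_A} \\
&\quad + \expect{(\hat f(\x) - f(\x))^2 \mathbb{1}_{A^c}},
\end{align*}
the second term immediately produces the $f_D^2 \, e^{-c''(n+m)}$ contribution, via the crude bound $|f(\x) - \hat f(\x)| \le f_D + |\overline{\varepsilon}|$ (with $\overline{\varepsilon}$ the average of the Gaussian noise on the selected neighbors) together with the tail bound $\pr{A^c} \le c_a e^{-c_b(n+m)}$ from Theorem B applied to the $n+m$ labeled and unlabeled points.

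Conditional on $A$, the graph distance is a mildly distorted version of the manifold geodesic distance, so the geodesic kNN regressor behaves like a standard manifold-intrinsic kNN regressor. I would first dispose of the inductive step: by the density lower bound $\mu(B_\x(r)) \ge Q r^d$, the Euclidean nearest neighbor $\x^*$ of $\x$ in $\mathcal{L} \cup \mathcal{U}$ lies within manifold distance $O(((n+m) Q)^{-1/d})$ except with probability $e^{-\Omega(n+m)}$, and the Lipschitz property then makes $|f(\x) - f(\x^*)|$ a lower-order contribution to the MSE.

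For the transductive estimate $\hat f(\x^*)$ I would use a bias--variance decomposition. Let $\rho$ be the graph distance from $\x^*$ to its $k$-th graph-nearest labeled neighbor; on $A$, all $k$ graph-neighbors lie within manifold distance $\rho/(1-\delta)$ of $\x^*$, so the Lipschitz assumption bounds the (squared) bias by $L^2 \rho^2 /(1-\delta)^2$, while the variance is $\sigma^2/k$. To control $\rho$, set $r_k = (2k/(nQ))^{1/d}$, so that $\mu(B_{\x^*}(r_k)) \ge 2k/n$; a Chernoff bound then gives at least $k$ labeled points inside $B_{\x^*}(r_k)$ except on an event of probability $e^{-\Omega(k)}$, which combined with $A$ yields $\rho \le (1+\delta) r_k$. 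Thus the squared bias is $\lesssim L^2 (k/n)^{2/d}$, and choosing $k \propto n^{d/(d+2)}$ balances this against $\sigma^2/k$ to give the minimax rate $n^{-2/(d+2)}$.

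The main obstacle is the probabilistic bookkeeping: the three events---Theorem B's approximation $A$, the Chernoff concentration for the $k$-th nearest labeled neighbor, and the inductive Euclidean-NN event---must compose so that all failure probabilities remain exponentially small in either $n$ or $n+m$ and no logarithmic factor leaks into the final rate. A secondary subtlety, present for the distance-cutoff rule, is to verify that the threshold $r$ of Theorem B is chosen so that the $k$ manifold-nearest labeled neighbors of $\x^*$ are reachable by short graph edges, so that the graph distances being controlled are indeed finite.
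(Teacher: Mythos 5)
Your overall architecture matches the paper's: split on the Isomap event $A$ of Theorem B, reduce the inductive error via the Euclidean nearest neighbor $\x^*$, and bound the transductive error by a bias--variance decomposition in which the bias is controlled through the Lipschitz property and the $\frac{1+\delta}{1-\delta}$ graph-distance distortion, while the variance is $\sigma^2/k$. However, two concrete steps fail as written. First, the inductive step makes an internally inconsistent claim: if the good event is ``some sample point lies within manifold distance $r$ of $\x$'', its failure probability is $(1-Qr^d)^{n+m}\le e^{-Qr^d(n+m)}$, so at $r\sim((n+m)Q)^{-1/d}$ this is $e^{-\Theta(1)}$, a constant --- not $e^{-\Omega(n+m)}$. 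You can have the rate-optimal radius or the exponentially small failure probability, but not both; forcing both would require a radius of order $(\log(n+m)/(n+m))^{1/d}$ and would leak a logarithmic factor that the theorem excludes. The fix (and the paper's actual route, Lemma \ref{lemma:fxstar_minus_fx_squared} combined with Lemma \ref{lemma:probability_integral}) is to skip the high-probability event altogether and bound the expectation by tail integration, $\expect{d^2_\M(\x,\x^*)}\lesssim (n+m)^{-2/d}+\text{diam}(\M)^2\,e^{-QR^d(n+m)}$. Second, your choice of $k$ is wrong: balancing the squared bias $\sim L^2(k/n)^{2/d}$ against the variance $\sigma^2/k$ gives $k\propto n^{2/(2+d)}$ (which is what the paper sets), not $k\propto n^{d/(d+2)}$; the two exponents coincide only when $d=2$. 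With your choice, for $d\ge 3$ the bias term is $n^{-4/(d(d+2))}\gg n^{-2/(2+d)}$, and for $d=1$ the variance term is $\sigma^2 n^{-1/3}\gg n^{-2/3}$, so the claimed minimax rate is not attained.

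Where your proposal genuinely differs from the paper is in how the distance to the $k$-th nearest labeled neighbor is controlled. You use ball-counting plus a Chernoff bound: at least $k$ labeled points fall in $B_{\x^*}(r_k)$ with $r_k=(2k/(nQ))^{1/d}$, except with probability $e^{-\Omega(k)}$. The paper instead randomly splits the $n$ labeled points into $k$ subsets of size $\lfloor n/k\rfloor$, bounds the sum of distances to the $k$ graph-nearest labeled points by the sum over subsets of the distance to each subset's nearest point (Eq.~\eqref{eq:split_S_i}), and then bounds each such expectation exactly by tail integration (Lemma \ref{lemma:manifold_dist_to_gnn}). The paper's trick yields clean expectation bounds, valid for every $n$ and $k$, with no additional failure event. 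Your route is also workable, but it introduces an extra term $f_D^2 e^{-\Omega(k)}$ which is not of the form $e^{-c''(n+m)}$ appearing in Eq.~\eqref{eq:main_theorem}; it must be absorbed into the constant $c$ of the polynomial term. That absorption is legitimate --- $\sup_n e^{-\Omega(n^{2/(2+d)})} n^{2/(2+d)}$ is finite and $f_D\le L\,\text{diam}(\M)$, so $c$ retains the permitted dependencies --- but it has to be argued explicitly, as does the constraint $r_k\le R$ (equivalently $k\lesssim QR^d n$) under which the density lower bound $\mu(B_{\x^*}(r_k))\ge Qr_k^d$ may be invoked.
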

\begin{proof}
    By Eq. (\ref{eq:inductive}), \(\hat f(\x)=\hat f(\x^*)$, where $\x^*$ is the nearest point to $\x$\ from $\mathcal L\cup\mathcal U$. Since
    \(
        (a+b)^2 \le 2 a^2 + 2 b^2
    \)
    \begin{align}
        &\expect{(\hat{f}(\x) - f(\x))^2}
        =
        \expect{(\hat{f}(\x^*) - f(\x))^2} \nonumber \\
        &=
        \expect{\left( ( \hat{f} \left( \x^* \right) - f(\x^*) ) + ( f(\x^*)  -  f(\x) ) \right)^2 } \nonumber \\
        &\le
        2\expect{( \hat{f}\left(\x^*\right) - f(\x^*))^2}
        +
        2\expect{\left( f\left(\x^*\right) - f(\x)\right)^2}. \nonumber 
    \end{align}
    Bounds on these two terms are given by lemmas \ref{lemma:fxstar_minus_fx_squared}
    and \ref{lemma:fhat_xstar_minus_f_xstar} below.
    In each of these lemmas the bound is composed of a term
    \( c_1 n^{-\frac{2}{2+d}} \)
    and an exponential term of the form $c_2 e^{-c_3(n+m)} f_D^2$.
    Hence, Eq. \eqref{eq:main_theorem} follows.
\end{proof}
\begin{remark}
    While the exponential term in Eq. \eqref{eq:main_theorem}
    may be huge for small sample sizes, if the number of \emph{unlabeled} samples is large enough
    then it is guaranteed to be small with respect to the first term for \emph{any} number of labeled samples $n$.
    It thus can be absorbed into the coefficient $c$ with negligible effect.
\end{remark}
\begin{remark}
    \citet[Theorem 1]{Kpotufe2011} proved  that for data sampled from an unknown manifold, even classical (supervised) kNN based on Euclidean distances achieves the minimax bound up to log factors.
    However, his result requires $O(\log n)$ labeled points in a small Euclidean ball around $\x$.
    This is different from our result that holds for any number of labeled points  $n$, and does not include log factors.
\end{remark}

We now state and prove the two lemmas used in the proof of Theorem \ref{thm:minimax_optimality}.
To this end, let $X_1, \ldots, X_{n+m} \simiid \mu$, and let $Y_i=f(X_i)+\eta_i$ be the observed responses at the first $n$ (labeled) points, where
$\eta_1,\ldots,\eta_n\simiid \mathcal N(0,\sigma^2)$.
\begin{lemma} 
        \label{lemma:fxstar_minus_fx_squared}

Let $\x \in \M$  and let $\x^*$ be its
Euclidean nearest point from  $\{X_1, \ldots, X_{n+m} \}$.
For any $L$-Lipschitz function $f$ and measure $\mu$
that satisfies $\mu(B_{\bf z}(r)) \ge Q r^d$ 
for all $ r \le R$ and ${\bf z}\in\mathcal M,$ 
    \begin{eqnarray}
        \expect{\left( f \left( \x^* \right) - f(\x)\right)^2}
        & \le &
        \frac{2 L^2}{(1-e^{-Q})^2}
        n^{-\frac{2}{2+d}} + \nonumber\\
        &&e^{-Q R^d (n+m)} \cdot f_D^2.   \nonumber
    \end{eqnarray}
\end{lemma}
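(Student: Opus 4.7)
The plan is to combine two complementary bounds on $|f(\x^*) - f(\x)|$---the Lipschitz bound $L\, d_\M(\x,\x^*)$ and the trivial bound $f_D$---and to split the expectation over a ``regular'' event and its complement. Define $E$ as the event that at least one of the $n+m$ samples lies in the manifold ball $B_\x(R)$, where $R$ is the radius for which the measure condition $\mu(B_{\bf z}(r)) \ge Q r^d$ is guaranteed. On $E^c$ I would apply the trivial bound, so that $(f(\x^*) - f(\x))^2 \le f_D^2$; by i.i.d.\ sampling, $\pr{E^c} \le (1 - QR^d)^{n+m} \le e^{-QR^d(n+m)}$, producing the second summand of the target bound exactly.

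On the good event $E$, I would invoke the Lipschitz bound $(f(\x^*) - f(\x))^2 \le L^2\, d_\M(\x,\x^*)^2$ and then control $\expect{d_\M(\x,\x^*)^2 \mathbb{1}_E}$. Since chord length never exceeds arc length, $\|\x-\x^*\| \le \min_i d_\M(\x,X_i)$, so the tail of the Euclidean nearest-neighbor distance inherits the (stronger) tail of the manifold nearest-neighbor distance; combined with the measure lower bound this yields $\pr{d_\M(\x,\x^*) > r} \lesssim e^{-Q r^d(n+m)}$ for $r \le R$. Choosing shells $r_j$ with $Q r_j^d(n+m) = jQ$, so that $\pr{d_\M(\x,\x^*) > r_j} \le e^{-jQ}$, one has
\[
\expect{d_\M(\x,\x^*)^2\, \mathbb{1}_E} \le \sum_{j \ge 1} r_j^2\, e^{-(j-1)Q}.
\]
Evaluating this geometric-type summation contributes a factor $(1-e^{-Q})^{-2}$ times the raw rate $(n+m)^{-2/d}$. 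Using the elementary inequality $(n+m)^{-2/d} \le n^{-2/(2+d)}$, valid for $n \ge 1,\, m \ge 0,\, d \ge 1$, one obtains the target first summand $2L^2(1-e^{-Q})^{-2}\, n^{-2/(2+d)}$.

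The main obstacle I anticipate is the reconciliation between Euclidean and manifold distances: $\x^*$ is the Euclidean nearest neighbor, while Lipschitz is measured with respect to $d_\M$. On $E$ there is a sample within manifold distance $R$, hence within Euclidean distance $R$, so $\|\x-\x^*\| \le R$; invoking the smooth, geodesically convex structure of $\M$ (quantified by $r_0$ and $s_0$ as in Theorem B) then converts Euclidean to manifold distance up to a constant factor in this neighborhood. A secondary technicality is the calibration of the shell schedule so that the exponent in $(1-e^{-Q})^{-2}$ comes out to exactly two; if the naive choice above is slightly off for small $d$, one adjusts the layering at the cost of an extra dimension-dependent constant that does not depend on $n$, $m$, or $\sigma$ and can be absorbed into the stated coefficient.
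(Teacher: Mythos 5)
Your proposal is correct in architecture and follows essentially the paper's own route: the same split into a good event and a bad event with the trivial bound $f_D^2$ and $\pr{E^c}\le(1-QR^d)^{n+m}\le e^{-QR^d(n+m)}$, the same Lipschitz reduction to the squared nearest-neighbor distance, the same shell-by-shell integration of the tail $e^{-Qt^d(n+m)}$ producing the factor $(1-e^{-Q})^{-2}(n+m)^{-2/d}$ (this is precisely Lemma \ref{lemma:probability_integral} of the supplement), and the same closing inequality $(n+m)^{-2/d}\le n^{-2/(2+d)}$. The one place you genuinely diverge is the Euclidean-versus-geodesic mismatch, and there you are in fact more careful than the paper. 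The paper conditions on $E_R=\{d_\M(\x,\x^*)\le R\}$ and then bounds both $\pr{E_R^c}$ and the conditional tail of $d_\M(\x,\x^*)$ exactly as if $\x^*$ were the \emph{geodesic}-nearest sample, i.e.\ as if $d_\M(\x,\x^*)>t$ forced all samples out of $B_\x(t)$; that implication holds for the geodesic nearest neighbor but is nowhere justified for the Euclidean one, and chord-versus-arc only gives $\|\x-\x^*\|\le\min_i d_\M(\x,X_i)$, which controls the wrong quantity (your own intermediate claim $\pr{d_\M(\x,\x^*)>r}\lesssim e^{-Qr^d(n+m)}$ has the same issue until your conversion step is invoked). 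Your repair---converting $\|\x-\x^*\|$ back to $d_\M(\x,\x^*)$ via the curvature and branch-separation scales $r_0,s_0$ of the Isomap analysis---is the natural one, but note its price: the conversion is only valid once $R$ is taken below those scales (an assumption absent from the lemma's hypotheses), and the conversion constant enters squared in the first term, so you recover the bound in the stated \emph{form} but not with the literal coefficient $2L^2/(1-e^{-Q})^2$. For the use made of this lemma in Theorem \ref{thm:minimax_optimality} this is immaterial, since only independence of the coefficients from $n$ and $m$ matters; the same comment applies to your honest caveat about recalibrating the shells for small $d$, which the paper's Lemma \ref{lemma:probability_integral} avoids by substituting first and bounding $\int t\,e^{-Qt^d\ell}\,dt$ over intervals of width $\ell^{-1/d}$, uniformly in $d\ge1$.
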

\begin{proof}
    Let $E_R$ denote the event that $d_\M(\x, \x^*) \le R$.
    \begin{align} \label{eq:E_f_xstar_minus_fx_squared}
        &\expect{\left( f(\x^*) - f(\x)\right)^2} \\
        &\le
        \pr{E_R} \cdot  \expect{\left( f(\x^*) - f(\x)\right)^2|E_R}
        +
        \pr{E_R^c} f_D^2. \nonumber
    \end{align}
    Since  $\mu(B_{\x}(r)) \ge Q r^d$ for any $r \le R$,
    \begin{align*}
        \pr{E_R^c}
        &= 
        \pr{d_{\M}(\x, \x^*) > R} \\
        &\le
        (1-QR^d)^{n+m}
        \le
        e^{-Q R^d (n+m)}.
    \end{align*}
    Next, we bound the first term of \eqref{eq:E_f_xstar_minus_fx_squared}.
Since \(f\) is $L$-Lipschitz w.r.t. the manifold,
    \begin{align*}
        \expect{\left( f(\x^*) - f(\x)\right)^2|E_R}
        \le
        L^2 \expect{d^2_\M(\x^*, \x) | E_R}.
    \end{align*}
Recall that for a non-negative random variable, $\expect{Z} = \int_0^\infty \pr{Z > t}dt$. Applying this to $d_\M^2(\x^*, \x)$ 
    \begin{align*}
&\expect{d^2_\M(\x^*, \x) |E_R} \!=
        \int_0^{\text{diam}(\M)} \! \pr{d^2_{\M}(\x^*, \x)>t | E_R} dt \\
        &= \int_0^{\text{diam}(\M)} \frac{\pr{d^2_{\M}(\x^*, \x)>t \text{ and } E_R}}{\pr{E_R}} dt \\
& = \frac{1}{\pr{E_R}} \int_0^{R^2} \pr{d_\M^2(\x^*, \x) \in(t,R^2)} dt \\
& \leq  \frac{1}{\pr{E_R}}  \int_0^{R^2} \pr{  d_\M(\x^*, \x) > \sqrt{t} } dt.
        \end{align*}        
    Lemma \ref{lemma:probability_integral} in the supplementary gives the following bound,
    which is independent of $R$
    \begin{align*}
        &\int_0^{R^2} \pr{d_\M(\x^*, \x) > \sqrt{t} } dt \\
        &\le
        2 (1-e^{-Q})^{-2}(n+m)^{-\frac{2}{d}}
        \le
        2 (1-e^{-Q})^{-2} n^{-\frac{2}{2+d}}.
    \end{align*}
Combining all of the above concludes the proof.\end{proof}

\begin{lemma} \label{lemma:fhat_xstar_minus_f_xstar}
        Under the same conditions of Lemma \ref{lemma:fxstar_minus_fx_squared},
    \begin{align*}
        &\expect{(\hat{f}(\x^*) - f(\x^*))^2} \\
        &\le
        \left( 2 L^2 \big( \tfrac{1+\delta}{1-\delta} \big)^2 c_1(\M,\mu,\delta) + \sigma^2 \right)
        n^{-\frac{2}{2+d}} \\
        &+
        4 c_a e^{-c_b \mu_{\min} \cdot (n+m)} f_D^2.
    \end{align*}
    where $\delta$ is the approximation ratio of Eq. (\ref{eq:isomap_distance_bounds}).
    The coefficients \(c_{a}\) and \(c_{b}\) depend on $\delta$ and on the manifold $\M$
    and graph construction parameters (see Eq. \eqref{eq:epsilon_bound}).
\end{lemma}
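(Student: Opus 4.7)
The plan is to split the squared error into a bias term (from averaging the Lipschitz function $f$ over nearby but distinct labeled points) and an independent noise term, control the bias via a comparison of graph-nearest and manifold-nearest labeled neighbors on the high-probability event $A$ of Theorem~B, and dump everything that fails on $A^c$ into the exponentially small tail.

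Let $X_{(1)},\ldots,X_{(k)}$ be the $k$ graph-nearest labeled neighbors of $\x^*$ with noise terms $\eta_{(1)},\ldots,\eta_{(k)}$, and write $\hat f(\x^*)-f(\x^*)=B+V$ where $B:=\tfrac{1}{k}\sum_i(f(X_{(i)})-f(\x^*))$ and $V:=\tfrac{1}{k}\sum_i\eta_{(i)}$. Since the noise is zero mean and independent of the sample, $\expect{BV}=0$ and $\expect{V^2}=\sigma^2/k$, so it remains to bound $\expect{B^2}$. Conditioning on the event $A$ of Theorem~B, on which $d_G$ and $d_\M$ agree within factor $1\pm\delta$ on all sample-point pairs, let $r_k^\M$ denote the manifold distance from $\x^*$ to its $k$-th manifold-nearest labeled point. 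On $A$ there are at least $k$ labeled points inside the $d_G$-ball of radius $(1+\delta)r_k^\M$ around $\x^*$, so each selected $X_{(i)}$ satisfies $d_\M(\x^*,X_{(i)})\le\tfrac{1+\delta}{1-\delta}\,r_k^\M$; the Lipschitz property then gives $B^2\mathbf{1}_A\le L^2\bigl(\tfrac{1+\delta}{1-\delta}\bigr)^2 (r_k^\M)^2$, whereas on $A^c$ only the crude bound $B^2\le f_D^2$ is available.

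To bound $\expect{(r_k^\M)^2}$, I would use the volume lower bound $\mu(B_{\x^*}(r))\ge Qr^d$ valid for $r\le R$: the number of labeled points in $B_{\x^*}(r)$ stochastically dominates a $\mathrm{Binom}(n,Qr^d)$, so a Chernoff tail bound gives $\pr{r_k^\M>r}\le e^{-\Omega(nQr^d)}$ once $nQr^d\gg k$. Integrating this as in the proof of Lemma~\ref{lemma:fxstar_minus_fx_squared} yields $\expect{(r_k^\M)^2}\le c_1(\M,\mu,\delta)(k/n)^{2/d}$, with an $e^{-\Omega(n)}\cdot\mathrm{diam}(\M)^2$ contribution from the regime $r>R$ absorbed into the final exponential term. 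Choosing $k=\lceil n^{2/(2+d)}\rceil$ equates $(k/n)^{2/d}$ and $1/k$, both of order $n^{-2/(2+d)}$. Combining this with $\pr{A^c}\le c_a e^{-c_b\mu_{\min}(n+m)}$ from Eq.~\eqref{eq:epsilon_bound} and adding back $\expect{V^2}=\sigma^2/k$ assembles the stated inequality, the factor~$4$ in front of the exponential absorbing the $A^c$ contributions of both the bias and the variance of the averaged noise.

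The main obstacle I anticipate is the manifold counting step yielding $\expect{(r_k^\M)^2}\le c_1(k/n)^{2/d}$: one has to carry through a Chernoff tail bound for the number of labeled points in a small manifold ball, handle the regime $r>R$ where the density lower bound no longer applies, and choose $k$ compatibly with the thresholds on $r$ assumed by Theorem~B (in particular not so large that $r_k^\M$ routinely exceeds $\min\{s_0,(2/\pi)r_0\sqrt{24\delta}\}$). Once that estimate is in hand the remainder is bookkeeping with the constants already exposed by Theorem~B.
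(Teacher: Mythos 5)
Your proposal is correct in substance, but it takes a genuinely different route from the paper's proof at two points. First, the decomposition: you write $\hat f(\x^*)-f(\x^*)=B+V$ and kill the cross term by conditioning on the design points, so that $\expect{(\hat f(\x^*)-f(\x^*))^2}=\expect{B^2}+\sigma^2/k$ directly. The paper instead uses the bias--variance decomposition plus the law of total variance, which produces three terms and forces an awkward splitting of the \emph{unsquared} bias over $A$ and $A^c$; that is where its constants $3\epsilon f_D^2$, the factor $2$ on the $L^2$ term, and ultimately the $4c_a$ come from. Your version is cleaner and yields strictly better constants (a single $\epsilon f_D^2$ contribution, no factor $2$), which of course still implies the stated bound with $c_1$ understood as a generic constant. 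Second, and more substantively, the bias on $A$: your reduction of every selected neighbor's manifold distance to $\tfrac{1+\delta}{1-\delta}r_k^\M$ is valid (it is the same two-sided use of Eq.~\eqref{eq:isomap_distance_bounds} as the paper's Eq.~\eqref{eq:dMXM_less_dMXG}), but you then need a tail bound on the $k$-th nearest-neighbor distance, which you obtain from a binomial Chernoff bound. The paper avoids concentration inequalities entirely: it randomly splits the $n$ labeled points into $k$ groups of size $\lfloor n/k\rfloor$, bounds the sum of the $k$ nearest-neighbor distances by the sum over groups of each group's \emph{single} nearest-neighbor distance (Eq.~\eqref{eq:split_S_i}), and applies Jensen, reducing everything to the $1$-NN estimate $\expect{d_\M^2(S_1(\x^*),\x^*)\,|\,A}\le c_1\lfloor n/k\rfloor^{-2/d}$ already supplied by Lemmas \ref{lemma:probability_integral} and \ref{lemma:manifold_dist_to_gnn}. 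The splitting trick buys a finite-sample bound valid for every $n$ and $k$ with no case analysis on whether $nQr^d\gg k$; your Chernoff route is the more standard kNN analysis and does work, but you must actually carry out the regime checks you flag as the main obstacle.

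Two small corrections. The $\text{diam}(\M)^2e^{-\Omega(n)}$ remainder from the regime $r>R$ cannot be absorbed into the final exponential term: that term decays in $n+m$, while your remainder decays only in $n$ (fix $n$ and let $m\to\infty$ to see the problem). It must instead be absorbed into the polynomial term $c_1 n^{-2/(2+d)}$, exactly as the paper does at the end of Lemma \ref{lemma:manifold_dist_to_gnn}. Also, your concern that $r_k^\M$ might exceed $\min\{s_0,(2/\pi)r_0\sqrt{24\delta}\}$ is unfounded: that threshold constrains the graph-construction radius $r$ in Theorem B, not any nearest-neighbor distance, and on the event $A$ the bounds of Eq.~\eqref{eq:isomap_distance_bounds} hold for \emph{all} pairs of sample points, however far apart; so no compatibility condition on $k$ is needed from that direction.
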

\begin{proof}
    By the bias-variance decomposition and the law of total variance,
    \begin{align} \label{eq:bias_variance_decomp}
        &\expect{(\hat{f}(\x^*) - f(\x^*))^2}
        =
        \text{bias}^2 \left( \hat{f} ( \x^*) \right) + \var{\hat{f}(x^*)} \nonumber \\
        &=
        \text{bias}^2 \left( \hat{f} ( \x^*) \right)
        +
        \mathbb{E} \left[ \var{ \hat{f}(\x^*) | X_1, \ldots, X_{n+m} } \right] \nonumber \\
        &+
        \var{\mathbb{E}\left[ \hat{f}(\x^*) | X_1, \ldots, X_{n+m} \right]} .
    \end{align}
    We now bound these three terms separately.
    We start with the bias term, which we split into two parts,
    depending on the event $A$
    \begin{eqnarray*}
        \text{bias} ( \hat{f} ( \x^*) ) 
        &=&        
        \pr{A} \cdot \text{bias} ( \hat{f} ( \x^*)  | A ) +\\
        &&
        \pr{A^c} \cdot \text{bias} ( \hat{f} ( \x^*) | A^c ) \\
        &\le&
        (1-\epsilon) \cdot \text{bias} ( \hat{f} ( \x^*)  | A ) + \epsilon \cdot f_{D}.
    \end{eqnarray*}
    Therefore,
    \begin{align}
        &\text{bias}^2 ( \hat{f} ( \x^*) )
        \le
        (1-\epsilon)^2 \text{bias}^2 ( \hat{f} ( \x^*)  | A ) \nonumber \\
        &+ 2 \epsilon(1-\epsilon) \text{bias} ( \hat{f} ( \x^*)  | A ) f_D 
        + \epsilon^2  f_D^2 \nonumber\\
        &\le
        \text{bias}^2 ( \hat{f} ( \x^*)  | A ) \label{ineq:squared_bias}
        + 3 \epsilon f_D^2.
    \end{align}
    Denote by $X^{(i,n)}_G(\x^*)$ the $i$-th closest labeled point to $\x^*$ according to the graph distance.
Let $Y^{(i,n)}_G(\x^*)$ be its response and $\eta^{(i,n)}_G(\x^*) = Y^{(i,n)}_G(\x^*) - f( X^{(i,n)}_G(\x^*))$
the noise.
    Using this notation, the geodesic kNN regression estimate of Eq. \eqref{eq:gknn_regressor} is
    \begin{align} \label{eq:gknn_explicit}
        \hat{f}(\x^*)
        =
        \sum_{i=1}^k \frac{Y_G^{(i,n)}(\x^*)}{k}
        =
        \sum_{i=1}^k \frac{f ( X_G^{(i,n)}(\x^*) ) + \eta_G^{(i,n)}(\x^*)}{k}.
    \end{align}
    For any random variable $Z$, we have $\mathbb{E}^2 \left[ Z \right] \le \expect{Z^2}$.
    Applying this, we get
    a bound on $\text{bias}^2 ( \hat{f} ( \x^*)  | A )$.
    \begin{align}
        &\text{bias}^2 ( \hat{f} ( \x^*) | A ) \nonumber 
        =
        \mathbb{E}^2 \Big[ \tfrac{1}{k} \sum_{i=1}^{k}  f(X_G^{(i,n)}(\x^*)) - f(\x^*) \Big| A \Big] \nonumber \\
        &\le
        \mathbb{E} \Big[ \big( \tfrac{1}{k} \sum_{i=1}^{k} ( f(X_G^{(i,n)}(\x^*)) - f(\x^*)  )
\big)^2 \Big| A \Big] \label{eq:squared_bias_first_bound} \\
        &\le \mathbb{E} \Big[ \big( \tfrac{1}{k} \sum_{i=1}^{k} L \cdot d_{\mathcal{M}} ( X_G^{(i,n)}(\x^*),
\x^* ) \big)^2 \Big| A \Big].  \label{eq:squared_bias_second_bound}
    \end{align}
    Conditioned on $A$, Eq. \eqref{eq:dMXM_less_dMXG} in the supplementary gives the bound
    \[
        d_{\mathcal{M}} \left( X_G^{(i,n)}(\x^*), \x^* \right) 
        \le
        \tfrac{1+\delta}{1-\delta}
        d_{\mathcal{M}} \left( X_\M^{(i,n)}(\x^*), \x^* \right) 
    \]
    Randomly split the labeled samples $X_1, \ldots, X_n$ into disjoint subsets $S_1, \ldots, S_{k+1}$,
    such that $|S_1|=\ldots=|S_k| = \lfloor \tfrac{n}{k} \rfloor$
    and $S_{k+1}$ contains the remaining elements.
    Let $S_i(\x^*) := \argmin_{\x' \in S_i} d_\M(\x^*, \x')$ be the closest element to $\x^*$ in $S_i$. Clearly,
    \begin{align} \label{eq:split_S_i}
        \sum_{i=1}^{k} d_{\mathcal M}\left(X_\M^{(i,n)}(\x^*), \x^* \right) \le \sum_{i=1}^{k} d_{\mathcal
M}\left(S_i(\x^*), \x^* \right).
    \end{align}
    Inserting this into Eq. \eqref{eq:squared_bias_second_bound} and applying Jensen's inequality
    \begin{align}
        &\text{bias}^2 ( \hat{f} ( \x^*) | A )
        \le\nonumber 
        L^2 \mathbb{E} \Big[ \big( \frac{1}{k}  \sum_{i=1}^{k} \tfrac{1+\delta}{1-\delta} d_\M ( S_i(\x^*), \x^*  ) \big)^2 \big| A \Big] \\
        &\le
        L^2 \big( \tfrac{1+\delta}{1-\delta} \big)^2 \mathbb{E} \Big[ \frac{1}{k} \sum_{i=1}^{k} d_{\mathcal{M}}^{2} ( S_i(\x^*), \x^* ) \big| A \Big] \nonumber \\
        &=
        L^2 \big( \tfrac{1+\delta}{1-\delta} \big)^2 \mathbb{E} \big[ d_{\mathcal{M}}^{2} \left( S_1(\x^*), \x^* \right) \big| A \big]. \nonumber
    \end{align}
    The set $S_1$ is simply a random draw of $ \lfloor \tfrac{n}{k} \rfloor $ points.
    By Lemma \ref{lemma:manifold_dist_to_gnn} in the supplementary,    
    \(
        \expect{d_{\mathcal{M}}^{2} \left( S_1(\x^*), \x^* \right) \Big| A }
        \le
        c_1(\M,\mu,\delta) \lfloor \tfrac{n}{k} \rfloor^{-\frac{2}{d}}.
    \)
    Plugging this back into Eq. \eqref{ineq:squared_bias}, we obtain a bound on the squared bias.
    \begin{align} \label{ineq:squared_bias_bound}
        \text{bias}^2
        \le
        L^2 \left( \tfrac{1+\delta}{1-\delta} \right)^2 c_1(\M,\mu,\delta) \lfloor \tfrac{n}{k} \rfloor^{-\frac{2}{d}} + 3\epsilon f_D^2.
    \end{align}    
    We now bound the second term in Eq. \eqref{eq:bias_variance_decomp}.
    Consider the definition of $\hat{f}(\x^*)$ in Eq. \eqref{eq:gknn_explicit}.
    Conditioned on $X_1, \ldots, X_{n+m}$, the terms $f(X_G^{(i,n)}(\x^*))$ are constants.
    The noise $\eta$ has zero mean and is independent of the draw of $X_1, \ldots, X_{n+m}$. Therefore
    \begin{align} \label{eq:variance_bound}
        \var{\hat{f}(\x^*) \big|  X_1, \ldots, X_{n+m}}
        =
        \sigma^2 / k.
    \end{align}
    To bound the third term in \eqref{eq:bias_variance_decomp}, we note that for any real random variable $Z$ and any $c \in \mathbb{R}$, we have
    \(
        \var{Z} = \expect{(Z - \expect{Z})^2} \le \expect{(Z - c)^2}
    \).
    Hence, 
    \begin{align*}
        &\var{\expect{\hat{f}(\x^*) \big| X_{1 \ldots n+m}}}
        =
        \text{Var} \Big( \tfrac{1}{k}\sum_{i=1}^{k} f(X_G^{(i,n)}(\x^*)) \Big) \\
        &\le
        \mathbb{E} \Big[ \big( \tfrac{1}{k}\sum_{i=1}^{k} f(X_G^{(i,n)}(\x^*)) - f(\x^*) \big)^2 \Big].
    \end{align*}
    We split this expectation with respect to the event $A$ and apply the bound
    we computed for Eq. \eqref{eq:squared_bias_first_bound}.
    \begin{align} \label{ineq:f_X_G_minus_f}
        &\mathbb{E} \Big[ \big( \tfrac{1}{k}\sum_{i=1}^{k} f(X_G^{(i,n)}(\x^*)) - f(\x^*) \big)^2 \Big] \\
        &= \pr{A} \cdot \mathbb{E} \Big[ \big( \tfrac{1}{k}\sum_{i=1}^{k} f(X_G^{(i,n)}(\x^*)) - f(\x^*) \big)^2 \big| A \Big] \nonumber \\
        &+ \pr{A^c} \cdot f_D^2
        \le
        L^2 \left( \tfrac{1+\delta}{1-\delta} \right)^2 c_1(\M,\mu,\delta) \lfloor \tfrac{n}{k} \rfloor^{-\frac{2}{d}}
        +
        \epsilon f_D^2. \nonumber
    \end{align}
    To conclude, by inserting equations \eqref{ineq:squared_bias_bound}, \eqref{eq:variance_bound}
    and \eqref{ineq:f_X_G_minus_f} into Eq. \eqref{eq:bias_variance_decomp},
    and applying the bound on $\epsilon$ in Eq. \eqref{eq:epsilon_bound}, we obtain
    \begin{align}
        &\expect{(\hat{f}(\x^*) - f(\x^*))^2} \nonumber\\
        &\le
        2 L^2 \left( \tfrac{1+\delta}{1-\delta} \right)^2 c_1(\M,\mu,\delta) \lfloor \tfrac{n}{k} \rfloor^{-\frac{2}{d}} \nonumber\\
        &+
        4 c_a e^{-c_b (n+m)} \cdot f_D^2        
        +
        \frac{\sigma^2}{k}. \nonumber
    \end{align}
    The lemma follows by setting $k = \lceil n^{\frac{2}{2+d}} \rceil $.
\end{proof}

\section{Computation of geodesic kNN} \label{sec:fastgknn}
Theorem \ref{thm:minimax_optimality} shows that the geodesic kNN regressor outlined in Section \ref{sec:framework} is mini\-max optimal. In this section we describe how it can also be computed efficiently,
assuming that the graph has already been constructed (more on this in Section \ref{sec:graph_construction} below). Computing $\hat f(\x)$ for all points in a dataset reduces to the following algorithmic problem: 
Let $G=(V,E)$ be a weighted undirected graph and let $\mathcal L \subseteq V$ be a subset of labeled vertices.
How can we efficiently find the $k$ nearest labeled neighbors of every vertex in the graph?
Denote $n = |\mathcal L|,\ N = |V|$.
A simple approach to this problem is to first apply Dijkstra's algorithm from each of the labeled points,
forming an \(n\times N\) matrix of all pairwise shortest graph distances \(d_G(s,v)\), where $s\in\mathcal L$ and $v\in V$.
The $k$ nearest labeled vertices of the $j$\textsuperscript{th} vertex correspond to the $k$ smallest cells in the $j$\textsuperscript{th} column.
The runtime of this method is $O \left( n N\log N+n|E| \right)$ \citep{dasgupta2006algorithms}.

For $k=1$, where one computes the single nearest labeled vertex to every vertex in a graph,
the result is known as the graph Voronoi diagram, with the labeled vertices acting as the centers of the Voronoi cells.
A fast algorithm for this problem was developed by \citet{Erwig2000}.
Algorithm \ref{alg:graph_k_nn} that we present here is a generalization of his approach for any $k \ge 1$.
Before describing it, we briefly recall Dijkstra's shortest path algorithm:
Given a seed vertex $s \in V$, Dijkstra's algorithm keeps,
for every vertex $v \in V$, an upper bound on $d_G(s, v)$, denoted $u[v]$,
initialized to  $0$ if $v = s$ and to $+\infty$ otherwise.
At every iteration, the vertex $v_0$ with the lowest upper bound is \emph{visited}:
For every neighbor $v$ of $v_0$, if $u[v_0] + w(v_0, v) < u[v]$,
then the current upper bound $u[v]$ is lowered.
$v_0$ is never visited again.

The basic idea behind Algorithm \ref{alg:graph_k_nn} can be described as running $n$ instances
of Dijkstra's algorithm "simultaneously" from all labeled vertices.
This is combined with an early stopping rule whenever $k$ paths from different labeled vertices
have been found.

As in Dijkstra's algorithm, Algorithm \ref{alg:graph_k_nn}
uses a priority queue based on a Fibonacci heap
with the 3 standard operations: insert, pop-minimum and decrease-key.
We use decrease-or-insert as a shorthand for
decreasing the key of an element if it is stored in the queue, and otherwise inserting it.
Instead of storing vertices in the priority queue, as in Dijkstra's algorithm,
Algorithm \ref{alg:graph_k_nn} stores pairs $(seed, v)$ keyed by $dist$,
where $dist$ is the current upper bound on $d_G(seed, v)$.
In the supplementary we prove that whenever $(dist, seed, v)$ is popped from the queue, we have $dist = d_G(seed,v)$.
At every iteration, the pair $(seed,v_0)$ with the lowest upper bound is \emph{visited}:
we examine every neighbor $v$ of $v_0$ and possibly update the current upper bound of $d_G(seed, v)$
using a decrease-or-insert operation.
We keep a set $S_v$ for every vertex $v \in V$ to prevent multiple visits from the same seed.
\begin{algorithm}
    \caption{Geodesic k nearest labeled neighbors}
    \label{alg:graph_k_nn}
    \paragraph{Input:} An undirected weighted graph $G = (V,E,w)$ and a set of labeled vertices $\mathcal L \subseteq V$.\\
    {\bf Output: }
    For every $v \in V$ a list \(kNN[v]\) with the $k$ nearest labeled vertices to $v$ and their distances.

    \algsetup{indent=2em}
    \begin{algorithmic}
        \STATE $Q \gets$ PriorityQueue()
        \FOR{$v \in V$}
            \STATE kNN[$v$] $\gets$ Empty-List()
            \STATE $S_v \gets \phi$
            \IF{$v \in \mathcal L$}
                \STATE insert($Q$, ($v$, $v$),\  priority = 0)\\\ 
            \ENDIF
        \ENDFOR
        \WHILE{$Q \neq \phi$}     
            \STATE (seed, $v_0$, dist) $\gets$ pop-minimum($Q$)
            \STATE $S_{v_0} \gets S_{v_0} \ \cup\ \{$seed$\}$
            \IF{length(kNN[$v_0$]) < $k$}
                \STATE \text{\bf append}\ (dist, seed) \text{\bf to} kNN[$v_0$] 
                \FORALL{$v \in $ neighbors$(v_0)$}
                    \IF{length(kNN[$v$]) < $k$ and seed $\notin S_v$}
                        \STATE decrease-or-insert($Q$, (seed, $v$),\\\qquad priority = dist $+ w(v_0, v)$)
                    \ENDIF
                \ENDFOR
            \ENDIF
        \ENDWHILE
    \end{algorithmic}
\end{algorithm}

Independently of our work, this algorithm was recently described by \cite{Harpeled2016} for the $\mathcal L = V $ case. 
Furthermore, an optimization of the priority queues was proposed that bounds the runtime at
\begin{align} \label{eq:runningtime}
    O(k|V|\log|V| + k|E|).
\end{align}
In the supplementary, we give a detailed description of this method, which we dub Algorithm \ref{alg:graph_k_nn_faster}.
We formally prove the correctness of both algorithms, present asymptotic bounds on their running time
and perform an empirical runtime  comparison  using the indoor localization data set.
In our experiments, both Algorithm \ref{alg:graph_k_nn} and Algorithm \ref{alg:graph_k_nn_faster}
have a similar runtime, which is orders of magnitude faster than the na\"ive method
of computing geodesic nearest neighbors using Dijkstra's algorithm.
It is also significantly faster than standard methods to compute eigenvectors, 
as required by Laplacian eigenvector regression.

Both Algorithm \ref{alg:graph_k_nn} and Algorithm \ref{alg:graph_k_nn_faster} use memory bounded by the minimum of $O(n|V|)$ and $O(k|E|)$. The first bound follows from the fact that $Q$ cannot have more than $|\mathcal L \times V|$ elements.
The second holds since every vertex $v$ is visited at most $k$ times and may insert up to $\deg(v)$ neighbors into $Q$.

\begin{remark}
    \cite{BijralRatliffSrebro2011}  also proposed a variant of Dijkstra's algorithm. However, their
    method is an improvement of \emph{single-source} Dijkstra in the setting of a dense graph constructed from
    points in $R^D$, whereas the methods we discuss here compute paths from \emph{multiple sources} and are applicable to any graph.
\end{remark}

\subsection{Notes on the graph construction time} \label{sec:graph_construction}
For the construction of $G$, the straightforward approach is to compute the distances between all pairs of points
in time $O(D|V|^2)$.
In light of Eq. \eqref{eq:runningtime} this may take much longer than actually computing geodesic $kNN$ on the graph $G$.
One way to reduce the runnning time is to store all of the data points in a k-d tree \citep{Bentley1975}
a ball tree \citep{Omohundro1989}  or some other data structure for spatial queries and then find nearby neighbors for every point.
These data structures are suitable for constructing both distance-cutoff graphs
and symmetric-kNN graphs from low-dimensional data.
For high-dimensional data, several works have appeared in recent years which propose fast methods of constructing approximate kNN graphs
\citep{ZhangHuangGengLiu2013, WangShiCao2013}.
The running time of these methods is $O(D|V|\log|V|)$ multiplied by some constant which is empirically small.
Combining these constructions with  fast algorithms for computing geodesic kNN 
yields a runtime of $O((k+D)|V|\log|V|)$.
This is much faster than many other semi-supervised methods, which typically involve
expensive calculations such as matrix inversion \citep{ZhuGhahramaniLafferty2003}
or eigenvector computation \citep{BelkinNiyogi2004}.

\section{Applications} \label{sec:application}
\subsection{Indoor localization  using Wi-Fi signals}
One motivation for our work is 
the problem of estimating the location of a mobile device in a closed environment
using its Wi-Fi signature as received by a wireless router. This problem
is gaining considerable interest in recent years due to its many potential
applications, such as indoor navigation inside large commercial spaces \citep{Liu2007}.
In indoor settings, the signal received by the router is a superposition
of multiple reflections of the same source, which differ
in their arrival time, direction and intensity.
This limits the use of classic outdoor positioning methods such as triangulation,
which require a direct line-of-sight between the transmitting device and
the receiver.

A common approach for tackling this problem, known as \textit{fingerprinting} in
the signal processing community,
is based on nearest-neighbor search.
First, a labeled set $\{(\x_i,y_i)\}_{i=1}^n$ is collected,
where $y_i \in \mathbb R^2$ is the location of the transmitter
and $\x_i$ is a feature vector extracted from the received signal.
The location of new instances is then
estimated via non-parametric regression methods such as k nearest neighbors.

For applications requiring high accuracy,
recording and maintaining a suitable labeled data set may be prohibitively
expensive.  
On the other hand, collecting vast amounts of unlabeled data may be done simply by  recording the Wi-Fi signals of various devices moving through the venue.
Indoor localization
is thus a natural application for semi-supervised methods.
Moreover, the space of feature vectors is parameterized
by a 2 or 3 dimensional position. Thus, we expect manifold-based methods to perform well in this task.
To test this empirically, we used two data sets of indoor localization: a simulated and a real data set.
A brief description follows. See the supplementary for details.

\begin{figure} 
    \includegraphics[width=0.45\textwidth]{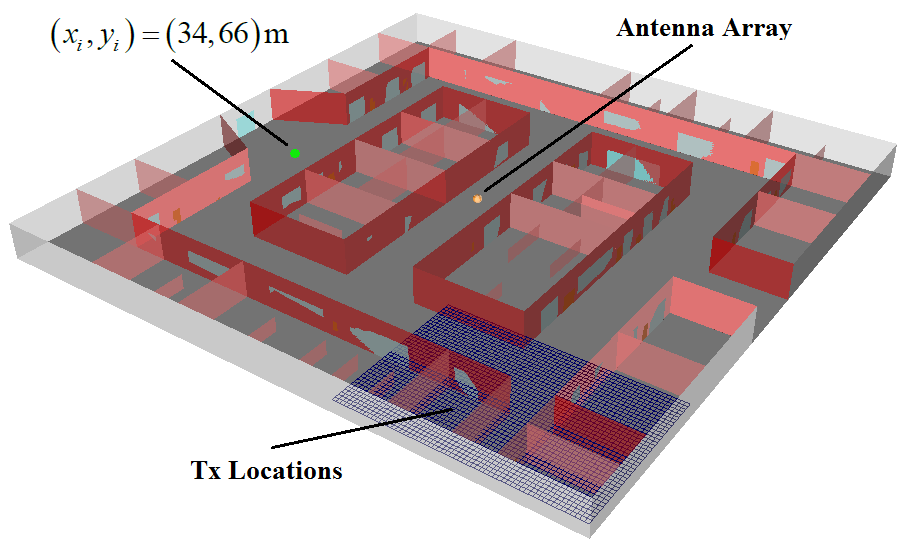}
    \caption{3D model of a $80 \times 80m \times 5m$ floor.}
    \label{fig:mall}
\end{figure}

\textbf{Simulated data:} This data consists of 802.11 Wi-Fi signals in an
artificial  $80m \times 80m$ indoor office environment  generated by \cite{KupershteinWaxCohen2013}
using a 3D radio wave propagation software, see Figure \ref{fig:mall}.

\textbf{Real data:} These are actual 802.11 signals,  recorded
by a Wi-Fi router placed roughly in the middle
of a $27m\! \times\! 33m$ office, see Figure \ref{fig:real_data_floor} of the supplementary.

The Signal Subspace Projection (SSP) of \cite{KupershteinWaxCohen2013} and \cite{JaffeWax2014}
is used as the fingerprint for localization.
It is based on the assumption that signals received from close locations
have similar properties of differential delays and directions of arrival.
In our experiments, the SSP features are $48 \times 48$ projection matrices, where the projected subspace is 10-dimensional.
We use the Frobenius norm as the distance metric and construct a symmetric-4NN graph as described in Section \ref{sec:framework}.
For more details on the datasets and SSP features, see the supplementary.

We compare our semi-supervised geodesic kNN regressor to Laplacian
eigenvector regression \citep{BelkinNiyogi2004}. As a baseline
we also applied classic kNN regression, using only the labeled samples
and optimizing over $k$.
Figure \ref{fig:localization_error} shows the median localization error
on the simulated data set as a function of the number of unlabeled locations, where the labeled points are placed on a fixed grid. 
Specifically, for the geodesic kNN regressor we used $k=7$
and exponentially decaying weights such that the weight of the $i$-th neighbor
is proportional to $1/2^i$.
Since the weights decay exponentially, the specific choice of $k$ is not important, with larger values of $k$ giving nearly identical results. 
For Laplacian eigenvector regression, we optimized over the number of
eigenvectors by taking the best outcome after repeating the experiment with
$10\%, 20\%, 30\%, 40\%$ and $50\%$ of the labeled points.

Table \ref{table:real_dataset_accuracy} shows the mean localization error on the real  data set for different densities of labeled points.
The results on both the simulated and real datasets show a clear advantage for the geodesic kNN regressor. As expected, the improvement
shown by the semi-supervised methods increases with the number of unlabeled locations.
Moreover geodesic kNN regression is much faster to compute than
the Laplacian eigenvector regressor, see Table \ref{table:runtime} in the supplementary. 

\begin{table}
    \centering
    \caption{Mean accuracy of kNN, geodesic kNN and Laplacian eigenbasis regression on the real data set}
    \label{table:real_dataset_accuracy}
    \begin{tabular}{lllll}
        \toprule
        Labeled grid & n & kNN & GNN & Laplacian\\\hline
        1.5\text{m} & 73  & 1.49\text{m} & {\bf 1.11}\text{m} & 1.36\text{m}
\\\hline 
        2.0\text{m} & 48   & 2.27\text{m}    & {\bf 1.49}\text{m} & 1.65\text{m}
\\\hline 
        3\text{m}   & 23   & 3.41\text{m}    & {\bf 2.41}\text{m} & 2.79\text{m}
\\\hline 
       \bottomrule
    \end{tabular}
\end{table}

\begin{figure}
        \includegraphics[width=\linewidth]{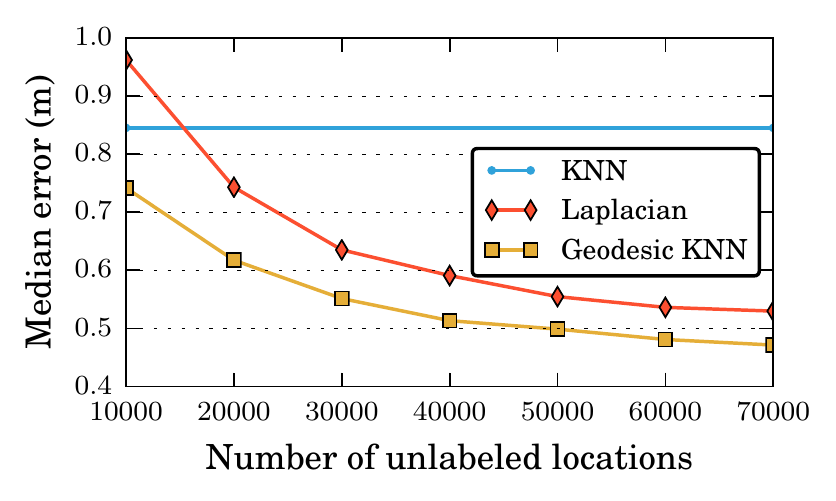}
        \includegraphics[width=\linewidth]{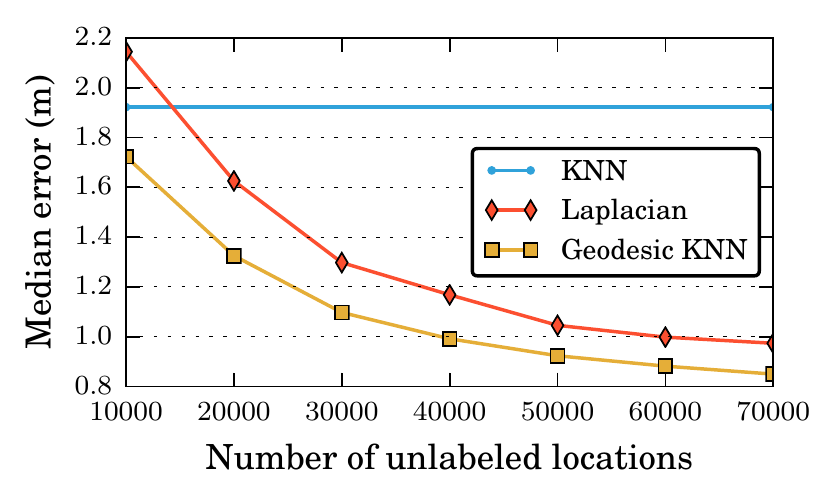}
        \caption{Median localization error vs. number of unlabeled points.
        Top: $1600$ labeled points placed on a regular grid with a side length of $2m$. Bottom: $400$ labeled points on a $4m$ grid.}
        \label{fig:localization_error}
\end{figure}

\subsection{Facial pose estimation}

We illustrate the performance of geodesic kNN on another regression problem, using the {\bf
faces} data set
where the predicted value is the left-right angle of a face image.\footnote{\url{http://isomap.stanford.edu/datasets.html}} This data set contains 
$698$ greyscale images  of a single face rendered at different angles and
lighting.
The instance space is the set of all $64 \times 64$ images whereas the intrinsic
manifold dimension is $3$.
For our benchmark, we computed the $\ell_1$ distance between all pairs of
images
and constructed a symmetric $4$-NN graph. For the geodesic kNN algorithm,
the edge
weights were set to the $\ell_1$ distances and $k$ was set to 1.
For  Laplacian eigenvector regression we used binary weights and set the
number
of eigenvectors to 20\% of the number of labeled points.
This is a common rule-of-thumb, and gave good results over the whole range.
\begin{figure}
    \centering
    \includegraphics[width=\linewidth]{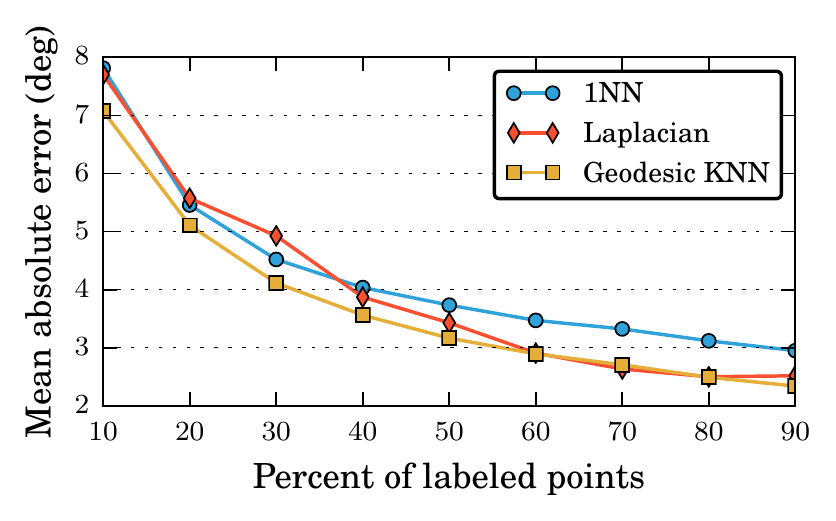}
    \includegraphics[width=0.9\linewidth]{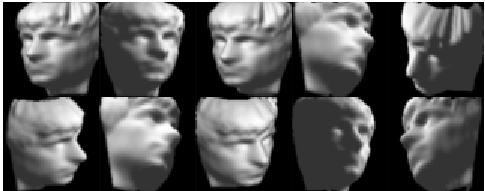}
    \caption{Top: mean prediction error for the left-right angle of the face.
    Bottom: sample images from the {\bf faces} data set, showing
different poses and lighting. }
    \label{fig:faces}
\end{figure}
Figure \ref{fig:faces} shows that geodesic kNN performs uniformly better than the nearest neighbor regressor and also outperforms the semi-supervised Laplacian regressor.

\subsubsection*{Acknowledgments}
We would like to thank
Mati Wax, Jonathan Rosenblatt, Amit Gruber, Roee David and Jonathan Bauch
for interesting discussions about this work and to Evgeny Kupershtein for providing data sets.

\pagebreak
\begin{center}
    \textbf{\Large Supplementary Material}
\end{center}

\appendix

\numberwithin{equation}{section}
\numberwithin{figure}{section}
\numberwithin{lemma}{section}
\numberwithin{theorem}{section}

\section{Auxiliary lemmas}
Consider a graph $G$ constructed from $\{ X_1, \ldots, X_{N} \}$ as described in Section \ref{sec:framework} of the main text.
For any $\x\in\{X_1,\ldots,X_N\}$, denote by $X_G^{(\ell)}(\x)$ and 
$X_{\mathcal M}^{(\ell)}(\x)$ its closest point from $X_1, \ldots X_\ell$ points, according to either the graph or the manifold distance, 
\begin{align*}
    X_G^{(\ell)}(\x) &:= \argmin_{X_j \in \{X_1, \ldots, X_\ell\}} d_G(X_i, X_j),
    \\
    X_{\mathcal M}^{(\ell)}(\x) &:= \argmin_{X_j \in \{X_1, \ldots, X_\ell\}} d_{\M}(X_i, X_j).
\end{align*}
\begin{lemma} \label{lemma:probability_integral}
    Let $X_1, \ldots, X_{\ell} \simiid \mu$ and let $\x \in \M$.
    Assume that  $\mu(B_{\x}(r)) \ge Q r^d$ for all $r \le R$.
    Then for any $\lambda > 0$, 
    \begin{align*}
        \int_{0}^{\lambda^2 R^2} \pr{d_{\M}\left(\x, X_{\M}^{(\ell)}(\x)\right) > \frac{\sqrt{r}}{\lambda}} dr
        \le
        \frac{2 \lambda^2 \ell^{-\frac{2}{d}}}{(1-e^{-Q})^2}.
    \end{align*}
\end{lemma}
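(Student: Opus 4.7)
The plan is to convert the given integral into a one-dimensional integral of the form $\int t\, e^{-Q\ell t^d}\,dt$, then partition the domain on a grid tuned so that a geometric factor $e^{-Qk}$ appears, and finally show that the resulting discrete sum is bounded by $2/(1-e^{-Q})^2$. First I would use independence of the $X_i$ together with the growth assumption on $\mu$ to get, for every $t \le R$,
$$
\pr{d_\M(\x, X_\M^{(\ell)}(\x)) > t} = (1 - \mu(B_\x(t)))^\ell \le (1 - Q t^d)^\ell \le e^{-Q \ell t^d},
$$
using $1-y \le e^{-y}$. Substituting $t = \sqrt{r}/\lambda$, so that $dr = 2\lambda^2 t\,dt$, then converts the integral of interest into
$$
\int_0^{\lambda^2 R^2} \pr{d_\M(\x, X_\M^{(\ell)}(\x)) > \tfrac{\sqrt r}{\lambda}}\,dr \le 2\lambda^2 \int_0^R t\, e^{-Q\ell t^d}\,dt \le 2\lambda^2 \int_0^\infty t\, e^{-Q\ell t^d}\,dt.
$$

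Next I would partition $[0,\infty)$ by $t_k := (k/\ell)^{1/d}$ for $k = 0,1,2,\ldots$, chosen so that $Q\ell t_k^d = Qk$. On each slab $[t_k,t_{k+1}]$ one has $e^{-Q\ell t^d} \le e^{-Qk}$, while $\int_{t_k}^{t_{k+1}} t\,dt = \tfrac{1}{2}\ell^{-2/d}\bigl((k+1)^{2/d} - k^{2/d}\bigr)$. Summing over $k$ yields
$$
2\lambda^2 \int_0^\infty t\, e^{-Q\ell t^d}\, dt \le \lambda^2 \ell^{-2/d} \sum_{k=0}^\infty e^{-Qk}\bigl((k+1)^{2/d} - k^{2/d}\bigr).
$$

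The final task is to show that the discrete sum is at most $2/(1-e^{-Q})^2$. Via Abel summation using the telescoping partial sums $\sum_{k=0}^{N}\bigl((k+1)^{2/d} - k^{2/d}\bigr) = (N+1)^{2/d}$, the sum equals $(1-e^{-Q})\sum_{k \ge 0} e^{-Qk}(k+1)^{2/d}$. For $d \ge 2$ I would use $(k+1)^{2/d} \le k+1$ together with the geometric identity $\sum_{k \ge 0}(k+1)e^{-Qk} = (1-e^{-Q})^{-2}$; for $d=1$ I would use the explicit formula $\sum_{k \ge 0}(k+1)^2 e^{-Qk} = (1+e^{-Q})(1-e^{-Q})^{-3}$. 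Both cases give a sum bounded by $2/(1-e^{-Q})^2$. The main obstacle is precisely this last step: the rate $\ell^{-2/d}$ is forced by the typical nearest-neighbor scale $t \sim \ell^{-1/d}$, but extracting the specific constant $(1-e^{-Q})^{-2}$ demands the particular grid choice $t_k = (k/\ell)^{1/d}$ and a case split on $d$, since $(k+1)^{2/d}$ becomes super-linear for $d=1$ and the naive bound $(k+1)^{2/d} \le k+1$ no longer applies.
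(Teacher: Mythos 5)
Your proof is correct, and its skeleton matches the paper's: bound the tail probability by $(1-Qt^d)^\ell \le e^{-Q\ell t^d}$, substitute $t=\sqrt r/\lambda$, extend the integral to $(0,\infty)$, and discretize at the nearest-neighbor scale $\ell^{-1/d}$. Where you diverge is the discretization itself and the resulting summation. The paper uses a \emph{uniform} grid with endpoints $k\,\ell^{-1/d}$, bounds each piece by $\ell^{-2/d}(k+1)e^{-Qk^d}$, uses $k^d\ge k$ to reduce to $\sum_{k\ge 0}(k+1)e^{-Qk}$, and finishes with the identity $\sum_{k\ge0}(k+1)x^k=\bigl(\sum_{k\ge0}x^k\bigr)^2=(1-x)^{-2}$ — no case split on $d$ is needed. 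You instead place the grid on the level sets of the exponential, $t_k=(k/\ell)^{1/d}$, which makes the exponential factor exactly $e^{-Qk}$ on each slab but turns the polynomial factor into the increments $(k+1)^{2/d}-k^{2/d}$, forcing the Abel summation and the $d=1$ versus $d\ge2$ case analysis. Both routes yield the claimed constant $2/(1-e^{-Q})^2$ (yours is even slightly sharper: $1/(1-e^{-Q})$ for $d\ge 2$ and $(1+e^{-Q})/(1-e^{-Q})^2$ for $d=1$). One small correction to your closing remark: the constant $(1-e^{-Q})^{-2}$ does \emph{not} demand your grid choice or the case split; the uniform grid $t_k = k\,\ell^{-1/d}$ extracts it in one line for all $d\ge 1$, precisely because the inequality $e^{-Qk^d}\le e^{-Qk}$ absorbs the dimension dependence before any summation is attempted.
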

\begin{proof}
    Substituting $t = \sqrt{r}/\lambda$, the  integral becomes
    \begin{align}
        \int_{0}^{R} \pr{d_{\M}\left(\x, X_{\M}^{(\ell)}(\x)\right)
>t} 2 \lambda^2  t dt.
                                \label{eq:d_M_R_int}
    \end{align}
    Now, $d_{\M}(\x, X_{\M}^{(\ell)}(\x)) > t$ if and only if
    all  $\ell$ samples fall outside the manifold ball $B_{\x}(t)$.
    Hence,
    \begin{align*}
        \pr{d_{\M}(\x, X_{\M}^{(\ell)}(\x)) > t}
        =
        (1-\mu(B_{\x}(t)))^\ell .
    \end{align*}
    Since $\mu(B_{\x}(t)) \ge Q t^d$ 
    for all $t \le R$,
    \begin{align} \label{ineq:large_distance_to_nn}
        \pr{d_{\M}(\x, X_{\M}^{(\ell)}(\x)) > t}
        \le
        (1-Qt^d)^\ell
        \le
        e^{-Q t^d \ell}
    \end{align}
We  insert (\ref{ineq:large_distance_to_nn}) into (\ref{eq:d_M_R_int}), extend the domain of integration to $(0,\infty)$ and split it into intervals of length $\ell^{-\frac{1}{d}}$.
    Each integral is bounded separately using the fact that t is  increasing
    and $e^{-Q t^d \ell}$ is decreasing.
        \begin{align*}
        &\int_{0}^{R}
        t \cdot e^{- Qt^d\ell} dt
        <
        \sum_{k=0}^{\infty}
        \int_{k \ell^{-\frac1d}}^{(k+1)\ell^{-\frac1d}}
        t \cdot e^{- Qt^d\ell} dt
        \\
        &\le
        \sum_{k=0}^{\infty}
        \ell^{-\frac2d}(k+1) \cdot e^{- k^d Q}
        \le
        \ell^{-\frac2d}
        \sum_{k=0}^{\infty}
        (k+1)  \cdot e^{-k Q}
        \\
        &=
        \ell^{-\frac2d}
        \left(
            \sum_{k=0}^{\infty}
            e^{-kQ}
        \right)^2
        =
        \frac{\ell^{-\frac2d}}{(1-e^{-Q})^2}.
    \end{align*}
Inserting this bound into (\ref{eq:d_M_R_int}) concludes the proof. 
\end{proof}

\begin{lemma} \label{lemma:manifold_dist_to_gnn}
    Consider a fixed point $\x \in \M$ and let $\x^* = \argmin_{\x' \in \{X_1, \ldots, X_{n+m} \}} \|\x-\x'\|$
    be its Euclidean nearest neighbor from the sample. We denote by $A$ the event that all
    the inequalities in Eq. \eqref{eq:isomap_distance_bounds} hold.
    Conditioned on this event, 
    \begin{align}
        &\expect{d_{\M}^2(\x^*, X_G^{(n)}(\x^*))\big|A} \le c_1(\M,\mu,\delta) n^{-\frac{2}{d}} \label{ineq:expected_squared_d_M}
    \end{align}
\end{lemma}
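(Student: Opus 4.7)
The plan is to use event $A$ to pass from the graph-nearest labeled neighbor $X_G^{(n)}(\x^*)$ to the manifold-nearest labeled neighbor $X_\M^{(n)}(\x^*)$, and then to bound the latter's distance from $\x^*$ by anchoring at the fixed point $\x$ and invoking Lemma~\ref{lemma:probability_integral}. Since $X_G^{(n)}(\x^*)$ minimizes $d_G(\x^*, \cdot)$ over the $n$ labeled samples, I would chain the inequalities of Eq.~\eqref{eq:isomap_distance_bounds} to get, under $A$,
\[
d_\M(\x^*, X_G^{(n)}(\x^*)) \le \frac{d_G(\x^*, X_G^{(n)}(\x^*))}{1-\delta} \le \frac{d_G(\x^*, X_\M^{(n)}(\x^*))}{1-\delta} \le \frac{1+\delta}{1-\delta}\, d_\M(\x^*, X_\M^{(n)}(\x^*)).
\]
This is the same graph-vs-manifold reduction already used inside Lemma~\ref{lemma:fhat_xstar_minus_f_xstar}, and it reduces the task to showing $\expect{d_\M^2(\x^*, X_\M^{(n)}(\x^*)) \mid A} \lesssim n^{-2/d}$.

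Next I would remove the conditioning via $\expect{Z \mid A} \le \expect{Z}/\pr{A}$; by Theorem~B, once $N = n+m$ is large enough, $\pr{A} \ge 1-\epsilon \ge 1/2$, so this factor is a constant that can be absorbed into $c_1(\M,\mu,\delta)$. Because $X_\M^{(n)}(\x^*)$ is the manifold-nearest labeled sample to $\x^*$, it is at least as close as $X_\M^{(n)}(\x)$, and the triangle inequality yields
\[
d_\M(\x^*, X_\M^{(n)}(\x^*)) \le d_\M(\x^*, X_\M^{(n)}(\x)) \le d_\M(\x^*, \x) + d_\M(\x, X_\M^{(n)}(\x)).
\]
Squaring and using $(a+b)^2 \le 2a^2 + 2b^2$, it suffices to bound $\expect{d_\M^2(\x, X_\M^{(n)}(\x))}$ and $\expect{d_\M^2(\x^*, \x)}$ separately.

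The first expectation is essentially Lemma~\ref{lemma:probability_integral} with $\ell = n$ and $\lambda = 1$: the integral there equals $\expect{\min(d_\M^2(\x, X_\M^{(n)}(\x)), R^2)}$, and the remainder $\expect{d_\M^2 \cdot \mathbf{1}_{d_\M > R}}$ is at most $\text{diam}(\M)^2 \cdot e^{-Q R^d n}$, which is exponentially small in $n$ and absorbable into $c_1 n^{-2/d}$. For the second expectation I would split on whether $\|\x - \x^*\| \le s_0/2$. On this close event, the positive-reach assumptions $r_0, s_0 > 0$ give a same-branch bi-Lipschitz estimate $d_\M(\x^*, \x) \le c^*(\M)\,\|\x^* - \x\|$; since Euclidean balls contain the corresponding manifold balls, $\mu$ inherits the same $Q r^d$ lower bound on Euclidean balls, and rerunning the argument of Lemma~\ref{lemma:probability_integral} in the Euclidean metric with $\ell = n+m$ yields $\expect{\|\x^* - \x\|^2 \cdot \mathbf{1}_{\text{close}}} \lesssim (n+m)^{-2/d} \le n^{-2/d}$. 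On the complementary far event I use the crude bound $d_\M \le \text{diam}(\M)$ together with $\pr{\|\x - \x^*\| > s_0/2} \le (1 - Q(s_0/2)^d)^{n+m}$, which is exponentially small.

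The main obstacle is the Euclidean-to-manifold comparison needed for $d_\M^2(\x^*, \x)$. Unlike the graph-to-manifold distortion, which is uniform once $A$ holds, the Euclidean-to-manifold estimate is a purely geometric statement and requires both $r_0 > 0$ (to rule out tight folds) and $s_0 > 0$ (to guarantee $\x$ and $\x^*$ lie on the same branch with high probability). The same-branch argument from the supplement of \cite{TenenbaumDesilvaLangford2000} is exactly what is needed; once invoked, the remainder of the proof is a routine rerun of Lemma~\ref{lemma:probability_integral}, and the pieces combine to give the stated $c_1(\M,\mu,\delta)\, n^{-2/d}$ bound.
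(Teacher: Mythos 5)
Your proposal is correct in substance, but after the shared opening step it takes a genuinely different route from the paper. Both arguments begin with the same reduction: under $A$, chaining Eq.~\eqref{eq:isomap_distance_bounds} gives $d_\M(\x^*, X_G^{(n)}(\x^*)) \le \tfrac{1+\delta}{1-\delta}\, d_\M(\x^*, X_\M^{(n)}(\x^*))$, which is exactly Eq.~\eqref{eq:dMXM_less_dMXG}. From there the paper stays anchored at $\x^*$: it writes the conditional expectation as an integral of tail probabilities, splits the integral at $\bigl(\tfrac{1+\delta}{1-\delta}\bigr)^2 R^2$, applies Lemma~\ref{lemma:probability_integral} with $\lambda = \tfrac{1+\delta}{1-\delta}$ and $\ell = n$ to the near part, and bounds the far part by $\text{diam}(\M)^2 e^{-Q R^d n}$ via Eq.~\eqref{ineq:large_distance_to_nn}; it never passes through the fixed point $\x$ and needs neither the triangle inequality nor any Euclidean-to-manifold comparison. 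You instead detour through $\x$: triangle inequality, Lemma~\ref{lemma:probability_integral} at $\x$ for the labeled sample, and a separate same-branch bi-Lipschitz bound on $d_\M(\x^*,\x)$ using $s_0$ and $r_0$. Comparing the two: the paper's route is shorter and avoids the curvature/branch-separation machinery entirely, but it silently treats the random anchor $\x^*$ --- which is selected using all $n+m$ points and is therefore correlated with the labeled sample --- as if it were a fixed point, and it applies the unconditional Lemma~\ref{lemma:probability_integral} to probabilities conditioned on $A$; your decomposition sidesteps both subtleties, since $\x$ is deterministic (so Lemma~\ref{lemma:probability_integral} applies verbatim, both on the manifold for the $n$ labeled points and, suitably rerun in the Euclidean metric, for all $n+m$ points), and you handle the conditioning explicitly via $\expect{Z \mid A} \le \expect{Z}/\pr{A}$. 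The price you pay: extra constants (the factor from $(a+b)^2 \le 2a^2 + 2b^2$, the bi-Lipschitz constant, and $1/\pr{A}$), reliance on the geometric lemmas from the ISOMAP supplement, and one genuine caveat --- your constant is uniform in $(n,m)$ only when $\pr{A}$ is bounded away from zero, i.e.\ only once $n+m$ exceeds the Theorem~B threshold, whereas the lemma as stated claims $c_1 = c_1(\M,\mu,\delta)$ with no restriction on the sample sizes. In the only regime where the lemma is invoked (the main theorem, with $\epsilon$ exponentially small in $n+m$) this is harmless, and your treatment of the conditioning is arguably more careful than the paper's; two small fixes would tighten your write-up: state the large-$(n+m)$ requirement explicitly, and replace $s_0/2$ by $\min\{s_0/2, R\}$ in the far-event bound, since $\mu(B_\x(r)) \ge Q r^d$ is only assumed for $r \le R$.
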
 
\begin{proof}
    Assume that $A$ holds.
    By applying Eq. \eqref{eq:isomap_distance_bounds} twice, we obtain an upper bound on
    the manifold distance to the closest labeled point in the graph.
    \begin{align} 
        &d_\M(\x^*, X_\M^{(n)}(\x^*))
        \ge \frac{d_G(\x^*, X_\M^{(n)}(\x^*))}{1+\delta} \label{eq:dMXM_less_dMXG} \\
        &\ge
        \frac{d_G(\x^*, X_G^{(n)}(\x^*))}{1+\delta} 
        \ge
        \frac{1-\delta}{1+\delta} d_\M(\x^*, X_G^{(n)}(\x^*)) \nonumber
    \end{align}  
    Now we apply the well-known equality for a non-negative
    random variable $\expect{X} = \int_0^\infty \pr{X > r}dr$,
    \begin{align*} 
        &\expect{d_{\M}^{2}(\x^*, X_G^{(n)}(\x^*)) \big| A}  \\
        &=
        \int \pr{d_{\M}^{2}(\x^*, X_G^{(n)}(\x^*)) 
> r \big| A} dr  \\
        &=
        \int_0^{\text{diam}(\M)^2} \pr{d_{\M}\left(\x^*, X_G^{(n)}(\x^*)\right) >
\sqrt{r} \big| A} dr  \\
        &\le
        \int_0^{\text{diam}(\M)^2} \pr{d_{\M}\left(\x^*, X_{\M}^{(n)}(\x^*)\right)
> \frac{1-\delta}{1+\delta}\sqrt{r} \bigg| A} dr  
        \\
        &=
        \int_0^{\left(\frac{1+\delta}{1-\delta}\right)^2 R^2} \pr{\ldots} dr
        +
        \int_{\left(\frac{1+\delta}{1-\delta}\right)^2 R^2}^{\text{diam}(\M)^2} \pr{\ldots} dr. 
    \end{align*}
    Lemma \ref{lemma:probability_integral} with \(\lambda=(1+\delta)/(1-\delta)\) gives a bound on the first integral.
    For the second integral, the probability inside it is monotonically decreasing, and by Eq. \eqref{ineq:large_distance_to_nn} it is in particular smaller than $e^{-Q R^d n}$. Hence, 
    \begin{align*}
        &\expect{d_{\M}^{2}(\x^*, X_G^{(n)}(\x^*)) \big| A} \\
        &\le
        \frac{2 (1+\delta)^2 }{(1-\delta)^2(1-e^{-Q})^2}
        n^{-\frac{2}{d}}
        +
        \text{diam}(\M)^2
        \cdot
        e^{-Q R^d n}.
    \end{align*}
    As a function of $n$, the second summand is negligible with respect to the first.
    Hence Eq. \eqref{ineq:expected_squared_d_M} follows.
\end{proof}

\section{Fast computation of transductive geodesic nearest neighbors}

In this section we provide more details regarding the efficient computation
of geodesic kNN for all vertices in a graph.
We begin by proving the correctness of Algorithm \ref{alg:graph_k_nn}
and analyzing its runtime.
Then, in Section \ref{sec:algorithm2} we present Algorithm \ref{alg:graph_k_nn_faster} by \cite{Harpeled2016}
which has a tighter bound on the asymptotic running time.
Finally, in Section \ref{sec:actual_runtime} we  test the empirical running time of these
algorithms on the simulated WiFi  data set.

\subsection{Proof of correctness}

We start with some auxiliary definitions and lemmas
regarding shortest paths.
Given an undirected and weighted graph \(G\), we denote paths in $G$ by $v_i \to v_j \to \ldots \to v_k$ or simply
$v_i \rightsquigarrow v_k$
when the context makes it clear what path we are referring to. 
The \textit{length} of a path is the sum of its edge weights, 
\[
    w(v_1 \to v_2 \to \ldots v_m) = \sum_{i = 1}^{m-1} w(v_i, v_{i+1}).
\]
The length of the shortest (geodesic) path between two vertices $v, v' \in V$ is denoted by $d_G(v, v')$.
A path $v \rightsquigarrow v'$ is called a \emph{shortest path} if $w(v \rightsquigarrow v') = d_G(v, v')$.
The following lemma is the key to the correctness of Algorithms \ref{alg:graph_k_nn} and \ref{alg:graph_k_nn_faster}.
\begin{lemma} \label{lemma:shortest_paths_to_k_nearest_labeled_points}
    Let $v \in V$ be a vertex and let $s$ be its $j$-th nearest labeled vertex.
    If $s \rightsquigarrow u \rightsquigarrow v$ is a shortest path
    then $s \in$ NLV$(u, j)$, where NLV$(u, j)$ is the set of $j$ nearest labeled vertices to $u$.
\end{lemma}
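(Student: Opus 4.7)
The plan is to argue by contradiction, with the contradiction driven by the shortest-path decomposition hypothesis together with the graph-metric triangle inequality. Suppose, toward a contradiction, that $s \notin \mathrm{NLV}(u,j)$. Then there exist $j$ distinct labeled vertices $s_1,\ldots,s_j$, none of them equal to $s$, each of which is at least as close to $u$ as $s$ is, with enough being strictly closer that $s$ fails to make the top-$j$ list at $u$; for the clean case I would assume $d_G(u,s_i) < d_G(u,s)$ for every $i$, deferring the tie-breaking bookkeeping to a separate paragraph.

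Next I would exploit the hypothesis that $s \rightsquigarrow u \rightsquigarrow v$ is a shortest path, which gives the exact additive identity
\begin{equation*}
d_G(s,v) \;=\; d_G(s,u) + d_G(u,v).
\end{equation*}
Combined with the (weak) triangle inequality $d_G(s_i,v) \le d_G(s_i,u) + d_G(u,v)$ and the strict inequalities $d_G(u,s_i) < d_G(u,s)$, this yields
\begin{equation*}
d_G(s_i,v) \;<\; d_G(s,u) + d_G(u,v) \;=\; d_G(s,v)
\end{equation*}
for each $i=1,\ldots,j$. Thus we have produced $j$ distinct labeled vertices, none equal to $s$, that are all strictly closer to $v$ than $s$. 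But this contradicts the assumption that $s$ is the $j$-th nearest labeled vertex to $v$, since at most $j-1$ labeled vertices can lie strictly closer to $v$ than the $j$-th nearest one.

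The only subtlety I expect to be worth careful handling is the tie-breaking convention used to define $\mathrm{NLV}(\cdot,j)$: in a graph with non-unique distances the statement ``$s$ is the $j$-th nearest'' and ``$s_i$ beats $s$ at $u$'' are not purely about $d_G$-values but about whatever deterministic ordering the algorithm uses. I would resolve this by perturbing weights by a lexicographic tiebreaker (or equivalently pairing each edge length with the vertex identifier) so that the induced total order on pairs $(d_G(x,y),\text{tag})$ is strict; the strict inequalities in the argument above then go through verbatim in this refined order, and the lemma follows in general. This is the only place where extra care is needed; the rest is a two-line triangle-inequality computation, made non-trivial only by the fact that equality in the triangle inequality is guaranteed by the shortest-path hypothesis on the $s \to u \to v$ decomposition.
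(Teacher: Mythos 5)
Your proposal is correct and follows essentially the same route as the paper's own proof: assume for contradiction that $j$ labeled vertices are strictly closer to $u$ than $s$, then combine the triangle inequality $d_G(s_i,v) \le d_G(s_i,u) + d_G(u,v)$ with the exact additivity $d_G(s,v) = d_G(s,u) + d_G(u,v)$ along the shortest path to conclude that all $j$ of them beat $s$ at $v$, contradicting $s$ being the $j$-th nearest labeled vertex. Your extra paragraph on tie-breaking addresses a point the paper silently glosses over (its proof also jumps straight to strict inequalities), and the lexicographic-perturbation fix is a reasonable way to make the statement robust to ties.
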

\begin{proof}
Assume by contradiction that there are
$j$ labeled nodes $s_1, \ldots, s_j$ such that
\[
    \forall i: d_G(s_i, u) < d_G(s, u).
\]
Then
\begin{align}
    \label{eq:d_G_s_i_v}
    d_G(s_i, v) &\le d_G(s_i, u) + d_G(u, v)  && \text{(triangle inequality)} \nonumber \\
    &< d_G(s, u) + d_G(u, v)  && \text{(by assumption)} \nonumber \\
    &= d_G(s, v). 
\end{align}
where the last equality is derived from the assumption that $u$ is on a shorted path between $s$ and $v$.
Eq. \eqref{eq:d_G_s_i_v} implies that the vertices $s_1, \ldots, s_j$
are all closer than $s$ to $v$, which contradicts the assumption.
\end{proof}
\noindent We now continue to the main part of the proof.
\begin{lemma} \label{lemma:triplets_represent_actual_paths}
    For every triplet$(dist, seed, v_0)$ popped from $Q$ there is a path $seed \rightsquigarrow v_0$ of length $dist$.
\end{lemma}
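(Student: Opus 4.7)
The plan is to prove this by strong induction on the order in which triplets are popped from the priority queue $Q$.

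First I will make precise the invariant: whenever a triplet $(dist, seed, v_0)$ is popped from $Q$, there exists a walk $seed = u_0 \to u_1 \to \cdots \to u_r = v_0$ in $G$ whose total edge weight equals $dist$. This is exactly the statement, but phrasing it as a loop invariant makes the induction cleaner.

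Base case: the first pop. Before any pop, the only triplets in $Q$ are the initial insertions, one for each labeled vertex, each of the form $(v, v)$ with priority $0$. So the first triplet popped is necessarily some $(0, s, s)$ with $s \in \mathcal{L}$, and the trivial zero-length walk $s \rightsquigarrow s$ witnesses the claim.

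Inductive step: suppose the invariant holds for the first $i-1$ pops, and consider the $i$-th popped triplet $(dist, seed, v_0)$. This triplet was placed in $Q$ either by the initial insertion phase or by a \texttt{decrease-or-insert} call during the processing of some earlier popped triplet. In the first case we again have $v_0 = seed$ and $dist = 0$, and the claim is trivial. In the second case, track the final \texttt{decrease-or-insert} operation that set the priority of $(seed, v_0)$ to its value at pop time: since \texttt{decrease-key} only lowers the stored priority, the value $dist$ at pop time must equal the priority supplied by this final update. That update was issued while processing a previously popped triplet $(dist', seed, u)$ in which $v_0$ appeared as a neighbor of $u$, and the priority supplied was exactly $dist' + w(u, v_0)$. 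Thus $dist = dist' + w(u, v_0)$. The triplet $(dist', seed, u)$ was popped earlier, so by the inductive hypothesis there is a walk $seed \rightsquigarrow u$ of total weight $dist'$. Appending the edge $(u, v_0)$ yields a walk $seed \rightsquigarrow v_0$ of weight $dist' + w(u, v_0) = dist$, completing the induction.

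The main obstacle is the bookkeeping around \texttt{decrease-or-insert}: we must be sure that the popped priority $dist$ really does decompose as $dist' + w(u, v_0)$ for some neighbor $u$ of $v_0$ that was itself popped (with the same seed) earlier. This is handled by observing that priorities only decrease over time, that each priority change is induced by a pop of a neighbor with matching seed, and that the pop of $(seed, v_0)$ must therefore correspond to whichever update lowered its priority last. Everything else is an immediate consequence of the algorithm's pseudocode and is not computationally substantive.
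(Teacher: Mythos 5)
Your proof is correct and takes essentially the same approach as the paper: an induction showing that every priority ever stored in $Q$ is the length of an actual walk, built by appending one edge to a walk witnessed by an earlier triplet. The only cosmetic difference is that you induct on the sequence of pops and trace back the final \texttt{decrease-or-insert}, whereas the paper inducts directly on the sequence of insertions/updates; both handle the decrease-key bookkeeping equivalently.
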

\begin{proof}
    We prove the claim by induction on the elements inserted into $Q$.\\
    
    \textbf{Base of the induction:} The first $L$ inserts correspond to the labeled vertices $\{(0, s, s) :\ s \in \mathcal L \}$.
    For these triplets the claim holds trivially.\\
    
    \textbf{Induction step:}
    Any triplet inserted into $Q$ or updated is of the form
    $(dist + w(v, v_0), seed, v)$ where $(dist, seed, v_0)$
    was previously inserted into $Q$. Hence, by the induction hypothesis there exists a path $seed \rightsquigarrow v_0$ of
length $dist$.
    Since $v$ is a neighbor of $v_0$ with edge weight $w(v_0, v)$, there exists a path $seed \rightsquigarrow v_0 \to v$
of length
    $w(seed \rightsquigarrow v_0) +\ w(v_0 \to v) = dist +\ w(v_0, v)$.
\end{proof}
\begin{lemma} \label{lemma:popped_distances_are_nondecreasing}
    The distances popped from $Q$ in the main loop form a monotone non-decreasing sequence.
\end{lemma}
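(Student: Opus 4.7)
My plan is to mimic the standard correctness argument for Dijkstra's algorithm, adapted to the fact that the priority queue now stores triplets $(dist, seed, v)$ rather than single vertices. The goal is to show that the sequence of priorities $d_1, d_2, \ldots$ extracted by successive pop-minimum operations satisfies $d_i \le d_{i+1}$.

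The main step is to prove the following invariant: \emph{immediately after the $i$-th pop, every triplet remaining in $Q$ has priority $\ge d_i$, and this property is preserved throughout the processing of iteration $i$.} I would argue this by induction on $i$. For the base case, right after initialization the only elements in $Q$ are the triplets $(0, s, s)$ for $s \in \mathcal L$, all with priority $0$, so the first pop yields $d_1 = 0$ and the invariant holds vacuously. For the induction step, when iteration $i$ pops $(d_i, seed, v_0)$, all remaining triplets must have priority $\ge d_i$ by the definition of pop-minimum on a Fibonacci heap. The only modifications made during iteration $i$ are decrease-or-insert operations with priority $d_i + w(v_0, v)$ for neighbors $v$ of $v_0$. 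Since the graph has nonnegative edge weights, $w(v_0, v) \ge 0$, so any newly inserted element has priority $\ge d_i$, and any decrease-key operation either leaves the existing priority unchanged (if $d_i + w(v_0, v)$ is larger) or replaces it with a value $\ge d_i$. Thus after iteration $i$ completes, every element of $Q$ still has priority $\ge d_i$, which gives $d_{i+1} \ge d_i$.

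I do not anticipate any substantive obstacle: the argument is essentially the classical Dijkstra invariant, with nonnegativity of edge weights doing all the work. The only subtlety to be careful about is the semantics of the decrease-or-insert operation, namely that a "decrease" that would actually increase the key does nothing; once this is made explicit, every modification in iteration $i$ produces a key bounded below by $d_i$, which is exactly what we need.
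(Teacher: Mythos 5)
Your proof is correct and follows essentially the same route as the paper's: the paper's one-sentence argument invokes exactly the facts you use, namely that $Q$ is a minimum priority queue and that all insertions or updates performed while processing a popped triplet have priority equal to the popped distance plus a non-negative edge weight. Your version merely makes the underlying induction and the semantics of decrease-or-insert explicit, which is a fair (if more verbose) formalization of the same idea.
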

\begin{proof}
    This follows directly from the fact that $Q$ is a minimum priority queue
    and that the edge weights are non-negative,
    hence future insertions or updates will have a priority that is higher or equal to
    that of an existing element in the queue.
\end{proof}

\begin{lemma} \label{lemma:main_lemma}
    Every time a triplet $(dist, seed, v_0)$ is popped from $Q$, the following conditions hold
    \begin{enumerate}
        \item If $seed \in NLV(v_0, k)$ then $dist = d_G(seed, v_0)$.
        \item All pairs $(s,v) \in \mathcal L \times V$ that satisfy $d_G(s, v) < dist$ and $s \in NLV(v,k)$
        are in the $visited$ set.
    \end{enumerate}
\end{lemma}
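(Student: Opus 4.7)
The plan is to prove both claims simultaneously by strong induction on the pop number $N$, with a mild no-ties convention on graph distances (equivalently, a fixed total-order refinement used to break ties in $\mathrm{NLV}$). For the base case, the first $|\mathcal L|$ pops are the initial triplets $(0, s, s)$, for which condition 1 is immediate and condition 2 is vacuous. In the inductive step I assume both conditions for pops $1, \ldots, N$; the two conditions for pop $N{+}1 = (dist, seed, v_0)$ are established in the order condition 2, then condition 1.

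Condition 1 will follow as a short deduction from condition 2. By Lemma \ref{lemma:triplets_represent_actual_paths}, $dist \ge d_G(seed, v_0)$. If $seed \in \mathrm{NLV}(v_0, k)$ and strict inequality held, then condition 2 at pop $N{+}1$ would force $seed \in S_{v_0}$ already before the pop. But $S_{v_0}$ is augmented only at the moment some $(\cdot, \cdot, v_0)$ is popped, and the \emph{seed} $\notin S_v$ guard together with the semantics of decrease-or-insert ensure that each pair $(seed, v_0)$ can be popped at most once; this contradicts the fact that the current pop is $(dist, seed, v_0)$. Hence $dist = d_G(seed, v_0)$.

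The main work is condition 2 at pop $N{+}1$. I argue by contradiction: among all offending pairs, i.e., $(s, v)$ with $s \in \mathrm{NLV}(v, k)$, $d_G(s, v) < dist$, and $s \notin S_v$, fix one with minimum $d_G(s, v)$. Let $u$ be the predecessor of $v$ on a true shortest $s \rightsquigarrow v$ path, so that $d_G(s, u) < d_G(s, v) < dist$, and by Lemma \ref{lemma:shortest_paths_to_k_nearest_labeled_points}, $s \in \mathrm{NLV}(u, k)$. The minimality of $(s, v)$ (together with the base-case fact $s \in S_s$) implies that $(s, u)$ is already visited, so it was popped at some pop $M \le N$, and condition 1 at pop $M$ gives $dist_M = d_G(s, u)$. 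Examining the algorithm at pop $M$: in the clean subcase $|kNN[u]| < k$ and $|kNN[v]| < k$, the edge $u \to v$ is relaxed and $(s, v)$ is pushed into $Q$ with priority $d_G(s, u) + w(u, v) = d_G(s, v) < dist$, contradicting that $dist$ is the current minimum of $Q$.

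The main obstacle will be the remaining cases, where $|kNN[w]| = k$ for $w \in \{u, v\}$ at pop $M$. The resolution rests on the observation that such saturation requires $k$ distinct seeds $\sigma_1, \ldots, \sigma_k$, none equal to $s$, to have already been popped at $w$ with distances $d_j \le d_G(s, u)$, whence Lemma \ref{lemma:triplets_represent_actual_paths} yields $d_G(\sigma_j, w) \le d_G(s, u)$. Taking $w = v$ exhibits $k$ labeled vertices strictly closer to $v$ than $s$ is (since $d_G(s, u) < d_G(s, v)$), contradicting $s \in \mathrm{NLV}(v, k)$. Taking $w = u$ and applying the fixed tie-breaking convention exhibits $k$ labeled vertices weakly closer to $u$ than $s$ yet distinct from $s$, contradicting $s \in \mathrm{NLV}(u, k)$. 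Each case therefore collapses, the minimal offender cannot exist, and condition 2 at pop $N{+}1$ holds, completing the induction.
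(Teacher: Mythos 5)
Your proof is correct and runs on the same engine as the paper's: induction over the sequence of pops, with Lemma \ref{lemma:triplets_represent_actual_paths} giving $dist \ge d_G$, Lemma \ref{lemma:shortest_paths_to_k_nearest_labeled_points} propagating $\mathrm{NLV}$ membership along shortest paths, Lemma \ref{lemma:popped_distances_are_nondecreasing} ordering the popped keys, and the final contradiction in both arguments being that $(s,v)$ would sit in $Q$ with key $d_G(s,v) < dist$ at the moment $(dist, seed, v_0)$ is popped. Where you genuinely diverge is the internal decomposition, and each divergence buys something. First, you prove Condition 2 before Condition 1 and then get Condition 1 as a short corollary (Condition 2 at the \emph{current} pop applied to $(seed,v_0)$ itself, plus once-per-pair popping); the paper instead proves Part 1 by a separate predecessor-relaxation case analysis, and in doing so cites ``Part 2 of the induction hypothesis'' for a pair whose distance is only known to be below the current $dist$ --- strictly this requires Part 2 at the current pop, so your ordering repairs a real wrinkle in the paper's logic. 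Second, your ``minimal offender plus immediate predecessor'' replaces the paper's frontier edge $v' \to v''$ along the shortest path; these are interchangeable devices. Third, you explicitly verify the neighbor-side guard $\mathrm{length}(kNN[v]) < k$ at pop $M$ via the saturation argument, a step the paper silently assumes (it only justifies the guard on the visited vertex). The price is your no-ties convention, which the paper does not assume: you need it both for $d_G(s,u) < d_G(s,v)$ (false for zero-weight edges) and for the $w=u$ saturation case, whereas the paper's theorem allows arbitrary non-negative weights; since $\mathrm{NLV}$ is itself ambiguous under ties, adding the convention is defensible, but you should flag it as an assumption beyond the stated lemma.
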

\begin{proof}
    We prove both claims simultaneously by induction on the popped triplets.\\\\
    \textbf{Base of the induction:}
    The first $L$ triplets are equal to $\{(0,s,s) : s \in \mathcal L\}$.
    Part 1 holds since $d_G(s,s) = 0$. Part 2 holds because there are no paths shorter than 0. \\\\
    \textbf{Induction step:}\\
    
    \emph{Part 1.} (If $seed \in NLV(v_0, k)$ then $dist = d_G(seed, v_0)$)\\\\
    By Lemma \ref{lemma:triplets_represent_actual_paths}, $dist$ is the length of an actual path,
    so it cannot be smaller than the shortest path length $d_G(seed, v_0)$.
    Assume by contradiction that $d_G(seed, v_0) < dist$. There is some \emph{shortest} path
    $seed \rightsquigarrow v_p \to v_0$ of length $d_G(seed, v_0)$ ($v_p$ may be equal to $seed$, but not to $v_0$).
    Clearly $ d_G(seed, v_p) \le d_G(seed,v_0) < dist $
    and by Lemma 3.1, 
    $seed \in NLV(v_p, k)$.
    Thus by Part 2 of the induction hypothesis $(seed, v_p) \in visited$.
    This implies that $(seed, v_p)$ was visited in a previous iteration.
    Note that it follows from $seed \in NLV(v_p, k)$ and from Lemma \ref{lemma:popped_distances_are_nondecreasing}
    that during the visit of $(seed, v_p)$ the condition $length(kNN[v_p]) < k$ was true.
    Therefore the command dec-or-insert$(Q, d_G(seed, v_p)+w(v_p, v_0), seed, v_0)$
    should have been called, but because $seed \rightsquigarrow v_p \to v_0$ is a shortest path,
    $d_G(seed, v_p)+w(v_p, v_0) = d_G(seed, v_0)$, leading to the conclusion that
    the triplet $(d_G(seed, v_0), seed, v_0)$ must have been inserted into $Q$ in a previous iteration,
    which leaves two options:
    \begin{enumerate}
        \item Either $(d_G(seed, v_0), seed, v_0)$ was never popped from $Q$, in that case it should have been popped from $Q$ in the current iteration instead of $(dist, seed, v_0)$. Contradiction.
        \item The triplet $(d_G(seed, v_0), seed, v_0)$ was popped from $Q$ in a previous iteration. However this implies a double visit of $(seed, v_0)$, which is impossible due to the use of the \emph{visited} set.
    \end{enumerate}
    \emph{Part 2.} (All pairs $(s,v) \in \mathcal L \times V$ that satisfy $d_G(s, v) < dist$ and $s \in NLV(v,k)$
        are contained in $visited$)

    Let $(s,v) \in \mathcal L \times V$ be a pair of vertices that satisfies $d_G(s,v) < dist$ and $s \in NLV(v,k)$.
    Assume by contradiction that $(s,v) \notin visited$.
    Let $s \rightsquigarrow v' \to v'' \rightsquigarrow v$ be a shortest path such that $(s,v') \in $ visited but $(s, v'') \notin $ visited.
    Such $v', v''$ must exist since  $(s,s) \in visited$
    and by our assumption $(s, v) \notin visited$.
    We note that $s$ may be equal to $v'$ and $v''$ may be equal to $v$.
    Since $(s,v') \in visited$, some triplet $(d,s,v')$ was popped from $Q$ in a previous iteration and
    by Part 1 of the induction hypothesis $d = d_G(s,v')$.
    By Lemma 3.1 
    we know that $s \in NLV(v', k)$,
    therefore during the visit of $(s, v')$ the condition $length(kNN[v']) < k$ was true.
    Following this, there must have been a call to
    decrease-key-or-insert$(Q, d_G(s, v')+w(v', v''), s, v'')$.
    Now,
    \begin{align*}
        &d_G(s, v') + w(v', v'')\\
        &= 
        d_G(s, v'') && \text{($s \rightsquigarrow v' \to v''$ is a shortest path)}\\
        &\le d_G(s,v) && \text{($s \rightsquigarrow v''$ is a subpath)}
    \end{align*}
    Hence the pair $(s, v'')$ was previously stored in $Q$ with distance $d_G(s, v'') \le d_G(s,v) < dist$.
    Since $(s,v'') \notin$ visited, it should have been present in $Q$ with a smaller distance than $(s, v)$
    as key, thus $(d_G(s,v''), s, v'')$ should have been popped instead of $(dist, seed, v_0)$,
    contradiction.   
\end{proof}
Finally, we are ready to state the theorem that Algorithm \ref{alg:graph_k_nn}
produces correct output, namely that its output for every vertex \(v\in V\) is indeed the set of its $k$ nearest labeled points, as measured by the geodesic graph distance.

\begin{theorem} \label{thm:main}
    Let $G=(V,E,w)$ be a graph with non-negative weights.
    For every vertex $v \in V$ let $\mathcal{L}_v$ denote the set of labeled
vertices in the connected component of $v$ and let $\ell_v = \min\{k, |\mathcal{L}_v|\}$.
    Then
    \begin{enumerate}
        \item Algorithm \ref{alg:graph_k_nn} stops after a finite number of steps.
        \item Once stopped, for every $v \in V$ the output list $kNN[v]$ is of the form
        $ kNN[v] = [(d_G(s_1, v), s_1), \ldots, (d_G(s_\ell, v), s_{\ell_v})]$
        where $s_1, \ldots, s_{\ell_v}$ are the  nearest labeled vertices,
        sorted by their distance to $v$.
    \end{enumerate}
\end{theorem}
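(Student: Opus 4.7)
The plan is to dispose of termination by a simple queueing argument and then establish Part 2 in two passes: first showing that every append records the true geodesic distance of the correct next-nearest labeled vertex, then showing that all $\ell_v$ nearest labeled vertices are eventually appended. For termination, whenever $(dist, seed, v_0)$ is popped, $seed$ is added to $S_{v_0}$, and the guard $seed \notin S_v$ in the subsequent decrease-or-insert prevents the pair $(seed, v_0)$ from ever being reinserted into $Q$. Hence each of the at most $|\mathcal L|\cdot |V|$ pairs is popped at most once, and the main loop terminates after finitely many iterations.

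For correctness of the appends, Lemma \ref{lemma:popped_distances_are_nondecreasing} immediately forces the distances appended to any list $kNN[v]$ to appear in non-decreasing order, so it suffices to identify the correct seeds. I would prove by induction on the pop index the following invariant: whenever a pop $(dist, seed, v_0)$ with $\text{length}(kNN[v_0]) < k$ triggers an append, $seed$ is the $(\text{length}(kNN[v_0])+1)$-th nearest labeled vertex of $v_0$ and $dist = d_G(seed, v_0)$. The rank ordering is pinned down by contradiction: if $seed$ had rank larger than this, then there would be a closer labeled $s' \in NLV(v_0, k)$ that, by Lemma \ref{lemma:main_lemma}(2), was already visited; by the inductive hypothesis it was appended to $kNN[v_0]$, driving $\text{length}(kNN[v_0])$ higher than the current value. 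Once $seed \in NLV(v_0, k)$ is established, Lemma \ref{lemma:main_lemma}(1) supplies $dist = d_G(seed, v_0)$.

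Completeness is the complementary half: at termination $\text{length}(kNN[v]) = \ell_v$ for every $v$. Assume for contradiction that $\text{length}(kNN[v]) = j < \ell_v$ and let $s_{j+1}$ denote the $(j+1)$-th nearest labeled vertex to $v$ within its connected component. Along any shortest path $s_{j+1} = u_0 \to u_1 \to \cdots \to u_m = v$, Lemma \ref{lemma:shortest_paths_to_k_nearest_labeled_points} gives $s_{j+1} \in NLV(u_i, k)$ for every $i$. An induction on $i$ then establishes that $(s_{j+1}, u_i)$ is visited: the base case is handled by the initial insertion of $(0, s_{j+1}, s_{j+1})$; the inductive step either propagates $(s_{j+1}, u_{i+1})$ into $Q$ via decrease-or-insert when the guard $\text{length}(kNN[u_{i+1}]) < k$ still holds (its priority then equals $d_G(s_{j+1}, u_{i+1})$ because we travel along a geodesic), or relies on the first-half correctness to conclude that the failure of this guard means $u_{i+1}$ already stores its top $k$ nearest labeled vertices, which include $s_{j+1}$. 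Applied at $u_m = v$, this shows $(s_{j+1}, v)$ was popped, and by the first-half invariant its pop must have appended $s_{j+1}$ to $kNN[v]$, contradicting $\text{length}(kNN[v]) = j$.

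The main obstacle is the tight coupling between the two inductions, on pop order and on position along a shortest path. In particular, the guard $\text{length}(kNN[v]) < k$ can abort propagation mid-path, and the entire argument hinges on the fact that any such abort can only occur at a vertex whose true top $k$ labeled neighbors are already completely recorded, so no required seed is lost. Ties among labeled vertices at equal geodesic distance require some care in stating the rank ordering, but since the theorem allows any sorted ordering of equidistant seeds they do not affect the conclusion.
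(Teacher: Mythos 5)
Your proof is correct and takes essentially the same route as the paper: termination from the fact that each pair $(seed, v)$ can be popped at most once, and Part 2 by induction on the popped triplets using Lemmas \ref{lemma:shortest_paths_to_k_nearest_labeled_points}, \ref{lemma:popped_distances_are_nondecreasing} and \ref{lemma:main_lemma}. The difference is only one of explicitness: the paper compresses both halves of Part 2 into a single sentence, whereas you spell out the append-correctness invariant and the completeness argument separately (your shortest-path propagation step for completeness is the same mechanism already used inside the paper's proof of Part 2 of Lemma \ref{lemma:main_lemma}).
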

\begin{proof}
    Part 1 follows trivially from the fact that every pair $(seed, v_0)$ can be popped at most once
and part 2 follows from Lemma \ref{lemma:main_lemma} and an induction on the popped triplets $(d, s, v)$.
\end{proof}

\subsection{Transductive runtime analysis of Algorithm \ref{alg:graph_k_nn}} \label{sec:transductive_runtime}

Lemma \ref{lemma:shortest_paths_to_k_nearest_labeled_points} explains why Algorithm \ref{alg:graph_k_nn} can stop exploring
the neighbors of vertices whose $k$ nearest labeled neighbors were found.
As Theorem \ref{theorem:runtime1} shows, this can lead to dramatic runtime savings.

\begin{theorem}
        \label{theorem:runtime1}
    Given a graph $G=(V,E)$ with $n$ labeled vertices,
    the runtime of Algorithm \ref{alg:graph_k_nn} is bounded by
    $O \left( k|E| + N_p \log |V| \right)$
    where $N_p$ is the total number of pop-minimum operations,
    which satisfies $N_p \le \min\{n|V|,k|E|\}$.
\end{theorem}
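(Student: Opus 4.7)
The plan is to decompose the work of the main loop into three categories: (i) pop-minimum operations, (ii) decrease-or-insert operations, and (iii) the inner for-loops over neighbors. I would then bound each category using the standard amortized costs for a Fibonacci heap: $O(\log |Q|)$ per pop-minimum and $O(1)$ per insert or decrease-key. Because the priority queue stores only pairs $(\text{seed}, v) \in \mathcal{L} \times V$, we always have $|Q| \le n|V|$ and hence $\log|Q| = O(\log|V|)$ (treating $n\le|V|$).

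First I would establish the key bookkeeping fact that every vertex $v_0 \in V$ can satisfy the condition $\text{length}(\text{kNN}[v_0]) < k$ at the time it is popped at most $k$ times: each such ``effective visit'' appends an entry to $\text{kNN}[v_0]$, after which the condition fails permanently. The entire inner for-loop over neighbors sits inside this if-block, so the total number of neighbor iterations across the whole algorithm is at most $\sum_{v_0 \in V} k\cdot\deg(v_0) = 2k|E|$. Each such iteration performs at most one decrease-or-insert, so both the aggregate neighbor work and the total number of decrease-or-insert calls are $O(k|E|)$.

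Next I would bound $N_p$ in two ways. The $S_{v_0}$ bookkeeping guarantees that each pair $(\text{seed}, v)$ is inserted into $Q$ at most once (further attempts either find the pair already in $Q$ and become decrease-key, or are skipped because $\text{seed} \in S_v$); thus $N_p$ is exactly the number of distinct pairs ever inserted. The trivial bound is $N_p \le n|V|$, the total number of such pairs. A second bound comes from the insertion accounting above: the initial phase inserts $n$ labeled-vertex pairs, and all subsequent insertions occur inside the inner for-loop, giving $N_p \le n + 2k|E| = O(k|E|)$. Combining yields $N_p \le \min\{n|V|,\,O(k|E|)\}$, as claimed.

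Finally I would assemble the total cost. The $N_p$ pop-minimum calls contribute $O(N_p\log|V|)$; the $O(k|E|)$ decrease-or-insert calls contribute $O(k|E|)$ amortized; the initial loop over $V$ and the neighbor iterations together contribute $O(|V| + k|E|)$. Summing gives the advertised bound $O\!\left(k|E| + N_p \log|V|\right)$. The main subtlety, and the place where I expect bookkeeping to matter most, is cleanly separating a \emph{pop} (which always incurs the $\log|V|$ cost, even when $\text{kNN}[v_0]$ is already full) from an \emph{effective visit} (which drives the $k|E|$ term through the for-loop over neighbors); conflating the two would either loosen the $N_p\log|V|$ factor or spuriously inflate the $k|E|$ factor.
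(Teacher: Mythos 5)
Your proof is correct and follows essentially the same route as the paper's: the same Fibonacci-heap cost accounting, the same $O(k|E|)$ bound on neighbor traversals via the at-most-$k$ effective visits per vertex, and the same two bounds on $N_p$ (one from the $S_v$ sets preventing repeated pops of a pair, one from counting insertions as $n + O(k|E|)$). Your explicit separation of ``pops'' from ``effective visits'' and the factor-of-two bookkeeping ($\sum_{v_0} \deg(v_0) = 2|E|$) are slightly more careful than the paper's write-up, but the argument is the same.
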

\begin{proof}
        Recall that in a priority queue based on a Fibonacci heap all operations
        cost $O(1)$ amortized time except pop-minimum which costs $\log|Q|$.      
        The runtime is dominated by the cost of all pop-minimum operations,
        plus the total cost of traversing the neighbors of the examined vertices.
        The latter takes $O(k|E|)$ time.
        Denote the total number of pop-minimum operations by $N_p$.
        We derive two different bounds on \(N_{p}\).
        Every time a $(seed,v)$ pair is popped from $Q$, it is added to the $visited$ set,
    which prevents future insertions of that pair into $Q$.
        Hence, each pair $(seed, v) \in \mathcal L \times V$ may be popped at most once from $Q$,
        which implies that
        $
                N_p \le n |V|.
        $
        In addition,  $N_p$ is bounded by the number of insertions into $Q$.
        First, there are $n$ insertions during the initialization phase.
        Then, for each vertex $v_0 \in V$,
        the "if $length(kNN[v_0]) < k$" clause can hold true at most $k$         times for that vertex.
        Each time, the neighbors of $v_0$ are examined and up to $deg(v_0)$         neighbors are inserted into $Q$.
        This yields the second bound,
        $ N_p \le n + k |E| = O(k|E|).$
\end{proof}

\subsection{Saving unneeded extractions} \label{sec:algorithm2}
Algorithm \ref{alg:graph_k_nn} keeps a priority queue with pairs $(seed,v) \in \mathcal L \times V$.
However, once a vertex $v$ has been visited from $k$ different seed vertices, we no longer
need to process it further. Thus any later pop operations which involve $v$ are a waste of CPU cycles.
Conceptually, once $v$ is visited for the $k$\textsuperscript{th} time, we would like to purge the queue
of all pairs $(s,v)$, but this is expensive since every pop operation costs $\log|Q|$.
Instead we use an idea by \cite{Harpeled2016}, which we detail in Algorithm \ref{alg:graph_k_nn_faster}.
For every vertex $v \in V$ we keep a separate priority queue $Q_v$, which will be disabled
once $v$ is visited $k$ times.
A global priority queue $Q$ is maintained that keeps the lowest element of each of the local queues $Q_v$.
Now popping an element involves popping some $v$ from $Q$ and then popping $Q_v$.
This is followed by an insert of the new minimum element from $Q_v$ into $Q$.

\setcounter{algorithm}{1}
\begin{algorithm}
    \caption{Geodesic k nearest labeled neighbors with faster priority queue handling}
    \label{alg:graph_k_nn_faster}
    \paragraph{Input:} An undirected weighted graph $G = (V,E,w)$ and a set of labeled vertices $\mathcal L \subseteq V$.

{\bf Output: }
For every $v \in V$ a list \(kNN[v]\) with the $k$ nearest labeled vertices to $v$ and their distances.

    \algsetup{indent=2em}
    \begin{algorithmic}    
        \STATE $Q \gets $PriorityQueue()
        \FOR{$v \in V$}
            \STATE $Q_v \gets $PriorityQueue()
            \STATE kNN[$v$] $\gets$ Empty-List()
            \STATE $S_v \gets \phi$
            \IF{$v \in \mathcal L$}
                \STATE insert($Q$, $v$, priority = 0)
                \STATE insert($Q_v$, $v$, priority = 0) \\\ \\
            \ENDIF
        \ENDFOR
        \WHILE{$Q \neq \phi$}
            \STATE ($v_0$, dist) $\gets$ pop-minimum($Q$)
            \STATE (seed, dist) $\gets$ pop-minimum($Q_{v_0}$)
            \STATE $S_{v_0} \gets S_{v_0} \ \cup\ \{$seed$\}$
            \STATE \text{\bf append}\ (seed, dist) \text{\bf to} kNN[$v_0$]             
            \IF{length(kNN[$v_0$]) <  $k$ and $Q_{v_0} \neq \phi$}
                \STATE (newseed, newdist) $\gets$ minimum($Q_{v_0}$)
                \STATE insert($Q$, $v_0$, priority = newdist)
            \ENDIF
            \FORALL{$v \in $ neighbors($v_0$)}
                \IF{length(kNN[$v$]) < $k$ and seed $\notin S_v$}
                    \STATE decrease-or-insert($Q_v$, seed,\\\qquad\qquad priority = dist $+ w(v_0,v)$)
                    \STATE decrease-or-insert($Q$, $v$, \\\qquad\qquad priority = dist $+ w(v_0,v)$)
                \ENDIF
            \ENDFOR
        \ENDWHILE
    \end{algorithmic}
\end{algorithm}

\begin{theorem}
    \label{theorem:runtime2}
    Given a graph $G=(V,E)$ with $n$ labeled vertices,
    the runtime of Algorithm \ref{alg:graph_k_nn_faster} is bounded by
    $O \left( k|E| + k V\log |V| \right)$
\end{theorem}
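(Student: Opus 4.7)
The plan is to amortize the work across three types of operations: pops from the global queue $Q$, pops from each local queue $Q_v$, and the decrease-or-insert operations performed while scanning neighbors. I will use the standard Fibonacci-heap cost accounting: insert and decrease-or-insert are $O(1)$ amortized, while pop-minimum on a heap of current size $m$ costs $O(\log m)$ amortized.

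First I would bound the number of iterations of the main while loop. Each iteration appends exactly one entry to $\text{kNN}[v_0]$, and the check $\text{length}(\text{kNN}[v_0]) < k$ together with the guard seed $\notin S_v$ in the inner for-loop prevents any vertex from being visited more than $k$ times. Moreover, once $\text{kNN}[v_0]$ reaches length $k$, the conditional \textbf{insert}$(Q,v_0,\ldots)$ is not executed, so $v_0$ stops re-entering $Q$. Hence the main loop executes at most $k|V|$ times.

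Next I would bound the queue sizes, so as to charge $O(\log|V|)$ for every pop-minimum. The global queue $Q$ holds at most one outstanding entry per vertex (each iteration pops $v_0$ and performs at most one re-insertion of $v_0$, and neighbor updates are coalesced by decrease-or-insert), so $|Q|\le|V|$ throughout. Each local queue $Q_v$ holds at most one entry per labeled seed (again by decrease-or-insert semantics), so $|Q_v|\le n\le|V|$. Therefore every pop-minimum on $Q$ or on any $Q_v$ has amortized cost $O(\log|V|)$.

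Finally I would tally the totals. Each iteration performs one pop from $Q$, one pop from $Q_{v_0}$, and at most one insert into $Q$; summed over at most $k|V|$ iterations this contributes $O(k|V|\log|V|)$. The inner for-loop performs $O(\deg(v_0))$ decrease-or-insert operations into $Q_v$ and $Q$, each $O(1)$ amortized. Summing over visits and using $\sum_{v_0} k\cdot\deg(v_0)=2k|E|$, the neighbor-scan contribution is $O(k|E|)$. Adding the two bounds yields the claimed $O(k|E|+k|V|\log|V|)$.

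The main obstacle is the careful verification that the two guards---the length check on $\text{kNN}[v_0]$ and the test seed $\notin S_v$---really do cap the number of visits of each vertex at $k$ and the total number of queue operations at the counts claimed above. Once these caps are established, the rest is a direct charging argument against at most $k|V|$ loop iterations and at most $2k|E|$ units of edge-scan work, and the stated bound drops out of the amortized costs.
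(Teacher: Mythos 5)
Your proof is correct and follows essentially the same route as the paper's: bound the number of pop-minimum operations by $k$ per vertex (hence $k|V|$ in total, each costing $O(\log|V|)$ since queue sizes never exceed $|V|$), and charge the neighbor scans at $O(1)$ amortized per edge visit for a total of $O(k|E|)$. The paper's own proof is just a terser version of this, inheriting the $O(k|E| + N_p\log|V|)$ accounting from its analysis of Algorithm \ref{alg:graph_k_nn} and then noting $N_p \le k|V|$; your write-up merely makes explicit the guard-based visit cap and queue-size bounds that the paper leaves implicit.
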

\begin{proof}
    The runtime of Algorithm \ref{alg:graph_k_nn_faster}
    is bounded by $O \left( k|E| + N_p \log |V| \right)$.
    Let $v \in V$ be a vertex.     
    Pairs of the form $(seed, v)$ may be popped at most $k$ times.
    hence $N_p \le k|V|$.
\end{proof}
\begin{figure}
    \centering
    \includegraphics[height=0.35\textwidth, width=0.4\textwidth]{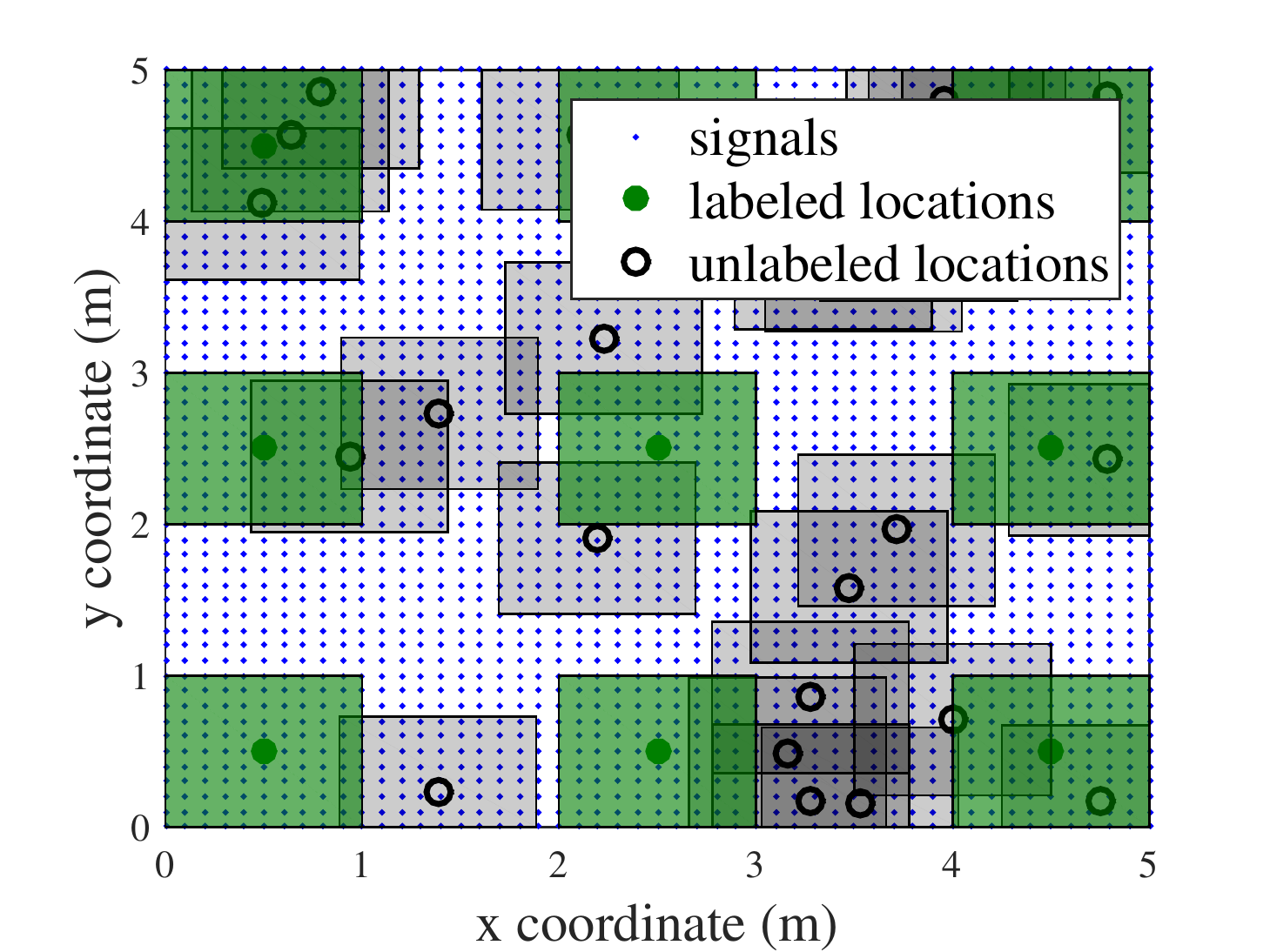}
    \caption{Schematic of the labeled and unlabeled point generation for the simulated data set. Labeled locations (green circles) were placed on a  $4m$ grid,
            whereas the unlabeled locations (grey circles) were placed at random.
            The signature of a location is computed by Eq. \eqref{eq:Rl} using all the signals in a 1m square neighborhood of the location.}
    \label{fig:signals}
\end{figure}

An immediate corollary of Theorem \ref{theorem:runtime2}
is that for a graph $G=(V,E)$ of bounded degree $d$, the runtime of Algorithm \ref{alg:graph_k_nn_faster} is
    \(
        O(k |V|\log|V|).
    \)
In contrast, the runtime of the na\"ive approach based on multiple Dijkstra runs is $O \left( n|V|\log|V| \right)$. Comparing these two formulas,
we see that major speedups are obtained in typical cases where $n \gg k$.
As we illustrate empirically in the following Section,
these speedups are very large in practice, even on graphs of moderate size.

\subsection{Empirical runtime comparison} \label{sec:actual_runtime}

We compare the runtime of Algorithms \ref{alg:graph_k_nn} and \ref{alg:graph_k_nn_faster}
to the na\"ive method of running Dijkstra's algorithm from each of the labeled points.
To make the comparison meaningful we implemented all of these algorithms in Python,
using a similar programming style and the same heap data structure.
The running times were measured for the simulated WiFi signals data set.
Figure \ref{fig:speedup} shows the relative speedup (measured in seconds) of both algorithms
compared to multiple Dijkstra runs. 
Interestingly, while Algorithm \ref{alg:graph_k_nn_faster} has better asymptotic guarantees
than Algorithm \ref{alg:graph_k_nn}, both show similar speedups on this particular data set.

\begin{figure}
        \includegraphics[width=\linewidth]{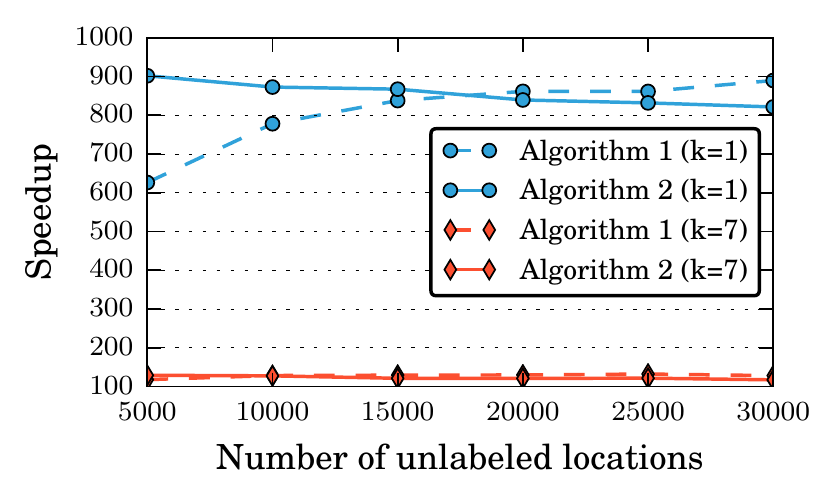}
        \includegraphics[width=\linewidth]{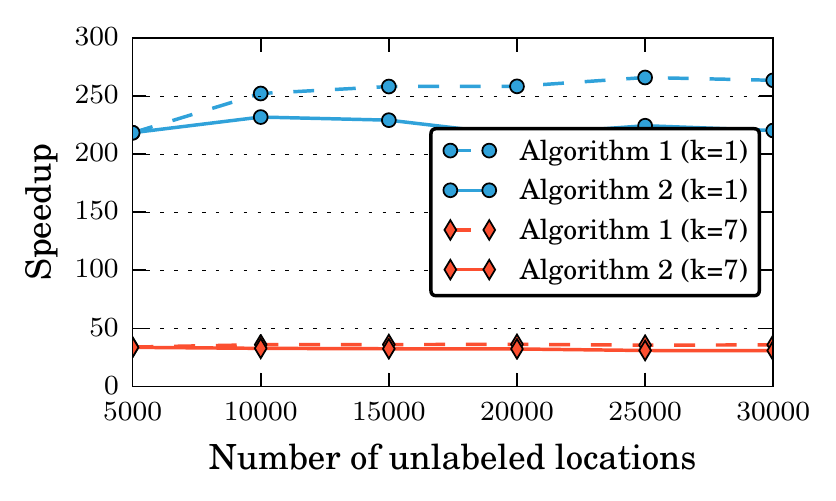}
        \caption{Relative speedup computing the geodesic 1NN and 7NN of all points in a graph
        using Algorithms \ref{alg:graph_k_nn} and \ref{alg:graph_k_nn_faster}.
        The speedup of both algorithms is compared to the na\"ive approach of running Dijkstra's algorithm from each labeled point.
        Top panel: $1600$ labeled locations are placed on a 2m square
        grid. Bottom panel: $400$ labeled locations placed on a 4m grid.}
        \label{fig:speedup}
\end{figure}
\begin{figure}
        \centering
        \includegraphics[width=\linewidth]{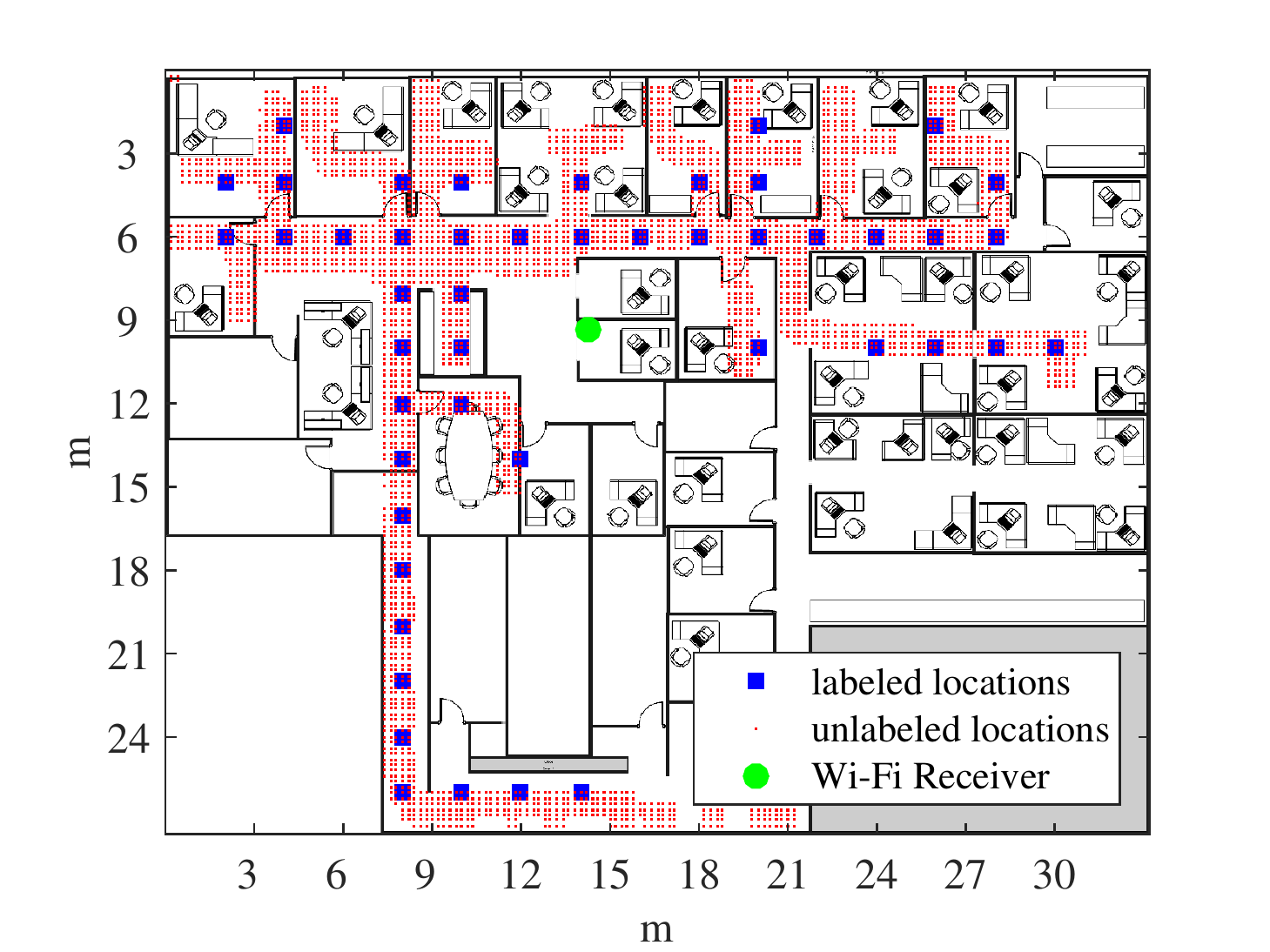}
        \caption{Schematic of the mapped areas in the real data set}
        \label{fig:real_data_floor}
\end{figure}

\begin{table}[h]
        \centering
        \caption{Runtime of Geodesic 7-NN vs. time to compute Laplacian eigenvectors}
        \label{table:runtime}
        \begin{tabular}{llll}
                \toprule
                \#unlabeled & G7NN & Laplacian & Graph build\\\hline
                1000   & 2.3\text{s} & 7.6\text{s} & 9s \\\hline
                10000  & 7\text{s}   & 195\text{s} & 76s \\\hline
                100000 & 56\text{s}  & 114\text{min} & 66min \\
                \bottomrule
        \end{tabular}
\end{table}

Table \ref{table:runtime} compares the runtime of the geodesic 7NN method
to the runtime of computing the top eigenvectors of the Laplacian matrix,  using the simulated indoor
localization data set with labeled
locations every $2m$.
The number of eigenvectors was chosen to be $320$, which is equal to 20\%
of the number of labeled points.
Computing the geodesic nearest neighbors by using Algorithm  \ref{alg:graph_k_nn}
is several orders of magnitude faster than computing the eigenvectors. This
is despite the fact that the eigenvector computation
is performed using the highly optimized Intel$^\circledR$  Math Kernel Library
whereas the geodesic nearest neighbor computation uses a simple Python implementation.
We expect an efficient  implementation of geodesic kNN to be at least 10
times faster.
For completeness, the third column shows the graph construction time, using
the na\"ive $O(n^2)$ algorithm.

\section{Indoor localization details}
\subsection{Dataset description}
\textbf{Simulated data:} This data consists of 802.11 Wi-Fi signals in an
artificial yet realistic environment generated by \cite{KupershteinWaxCohen2013}
using a 3D radio wave propagation software.
The environment is an $80m\times\!  80m$ floor.
In its center is a Wi-Fi router with $p=6$ antennas.
See Figure \ref{fig:mall}. At various locations $(x,y) \in \mathbb{R}^2$,
$N=8$ consecutive samples of
a Wi-Fi packet's (constant) preamble are recorded, at equally spaced time
intervals of $50 \mu s$.
The samples are stored in a complex-valued vector $s_{x,y} \in \mathbb{C}^{pN}$
which we refer to as the \emph{signal} received from location $(x,y)$.
The simulated signals were generated on a dense $0.1m$ grid covering the
entire area of the floor.

\noindent \textbf{Real data:} This data consists of actual 802.11 signals,  recorded
by a Wi-Fi router with \(p=6\) antennas placed approximately in the middle
of a $27m\! \times\! 33m$ office, see Figure \ref{fig:real_data_floor}.
The transmitter was a tablet connected to the router via Wi-Fi. The signal
vector of each location $(x,y)$ was sampled $N=8$ times from every antenna.
The transmitter locations were 
entered manually by the operator.
For both the labeled and unlabeled locations, we first generated a square grid covering the area of the office, and then kept only the locations that contained received signals.
For the labeled points, we repeated the experiments with several grid sizes ranging from $1.5m$ to $3m$.  The location of the WiFi router is marked by the green circle.

\subsection{Feature extraction and distance metric for the SSP method}

The Signal Subspace Projection method is based on the assumption that signals
originating from nearby locations are high dimensional vectors
contained in a low dimensional subspace.
Hence, signals originating from nearby locations are contained in a low dimensional
subspace. The subspace around each location is used as its signature.   
Specifically, the signature  $P_\ell$
of a location $\ell = (x,y) \in \mathbb R^2$ is computed as follows. First,
the covariance matrix of the signals in the proximity of $\ell$ is computed,
using all the signals in the dataset that are inside a $1m$ square 
around $\ell$ (see Figure \ref{fig:signals}),
\begin{align} \label{eq:Rl}
    R_{\ell} := \sum_{\ell' \in \mathbb{R}^2: \| \ell - \ell' \|_{\infty}
        < 0.5m} s_{\ell'} s_{\ell'}^*
\end{align}
where $s_{\ell'}^*$ denotes the Hermitian transpose of the column vector
$s_{\ell'} \in \mathbb{C}^{pN}$.
Next, we compute the $n_{pc}$ leading eigenvectors of $R_\ell$,
forming a matrix $V_\ell \in \mathbb{C}^{pN \times n_{pc}}$.
The SSP signature is the projection matrix onto the space spanned by these
eigenvectors, $P_{\ell} :=  V_\ell V_\ell^*$. 
In our experiments, we picked $n_{pc}=10$, though other choices in the range
$\{8, \ldots, 12\}$ gave results that are almost as good.
The distance between pairs of locations is defined as the Frobenius norm
of the difference of their projection matrices,
\(
d_{i,j} := \| P_{\bf \ell_i}-P_{\bf \ell_j}\|_F.
\)
gave us results that are on par with the best methods of \cite{JaffeWax2014}.

For the simulated dataset, to mimic how a real-world semi-supervised localization
system might work, we generated the labeled locations on a regular square
grid,
whereas the unlabeled locations were randomly distributed over the entire
area of the floor (see Figure \ref{fig:signals}).
For the real dataset,
due to physical constraints, labeled and unlabeled points were not placed on a regular
grid.
Then, we created a symmetric kNN graph by connecting every point $x_i$ with
its $k_G$ 
closest neighbors, with corresponding weights given by
\(
w_{i,j} = 1+\epsilon d_{i,j}
\).
Here $\epsilon>0$ is a small constant which gives preference to paths with
smaller $d_{i,j}$.
Experimentally using the symmetric kNN graph construction with $k=4$ gave slightly better results than choosing $k \in \{3, 5, 6\}$
for both the geodesic kNN regressor and for the Laplacian eigenvector regressor.

\bibliography{geodesic-knn}
\bibliographystyle{apalike}

\end{document}